%% file: preprint.tex
\documentclass{article} 
\usepackage{iclr2026_conference,times}

\input{math_commands.tex}

\usepackage{hyperref}
\usepackage{url}
\usepackage{booktabs}
\usepackage{amsmath,amssymb}
\usepackage{mathtools}
\usepackage{amsthm}
\usepackage{graphicx}
\usepackage{tikz}
\usetikzlibrary{arrows.meta,positioning}
\usepackage{quantikz}

\providecommand{\ket}[1]{\lvert#1\rangle}
\providecommand{\bra}[1]{\langle#1\rvert}

\theoremstyle{plain}
\newtheorem{proposition}{Proposition}
\newtheorem{lemma}{Lemma}
\theoremstyle{definition}
\newtheorem{definition}{Definition}

\title{Effective Dimension Governs Generalization in Quantum Kernel Vision Models}

\author{Jian Xu$^{1,2}$, \; Delu Zeng$^{3}$, \; John Paisley$^{4}$, \; Qibin Zhao$^{2}$ \\[2pt]
$^1$RIKEN iTHEMS \quad $^2$RIKEN AIP \quad $^3$South China University of Technology \quad $^4$Columbia University \\[2pt]
\texttt{jian.xu@riken.jp}}

\newcommand{\deff}{d_{\mathrm{eff}}}

\iclrfinalcopy
\begin{document}

\maketitle
\lhead{}\rhead{}\chead{}

\begin{abstract}
Recent quantum vision models---quantum vision transformers and quantum
convolutional networks---report two striking but unexplained empirical
phenomena: (i) ansatze with more, or more uniformly distributed, entanglement
generalize better, and (ii) injecting quantum noise can \emph{improve} test
accuracy rather than degrade it. These observations are currently treated as
curiosities, discovered by grid search and explained, if at all, by hand. We
show that both are manifestations of a single, measurable quantity: the
\emph{effective dimension} $\deff$ of the (noise-shaped) quantum feature
kernel. Working primarily with quantum-kernel vision models---a quantum feature
map read out by a kernel classifier---we give a spectral account in which
entanglement structure and quantum noise are two knobs that move $\deff$; in an
overfitting regime, contracting $\deff$ acts as ridge-like regularization. We
analyze the mechanism: an \emph{exact} decomposition of the depolarized kernel
$K_p=(1-p)^2K+\tfrac{p(2-p)}{D}\mathbf{1}\mathbf{1}^\top$ with $\deff(K_p)\to1$, a
contraction result (and its boundary) for amplitude damping, a kernel-machine
capacity bound, and a capacity/alignment risk decomposition; the monotone
contraction operative in our entangled experiments is verified empirically, not
proven in general. Our informative empirical finding is that test accuracy
collapses onto a single function of $\deff$ \emph{across distinct entangling
ansatze}---compressing different spectral shapes onto one curve
($R^2{=}0.82\pm0.08$ over seeds). Along the one-parameter depolarizing family the
collapse is instead exact \emph{by construction}; we use it only to confirm the
kernel decomposition to machine precision and at up to $12$ qubits, not as
evidence for $\deff$. Amplitude damping contracts $\deff$ and lifts test accuracy
by up to $+13\%$ along an inverted-U sweet spot; the effect's sign flips between
the over- and under-fitting regimes; noise injection matches an explicit
spectral-filtering frontier (so it is not a weak substitute for hyperparameter
tuning); and the phenomenon persists in \emph{trained} QViT- and QCNN-like models.
Entanglement plays the complementary role of a precondition---it supplies the
feature-space alignment without which the $\deff$ law does not hold. Our results
organize two reported anecdotes into a single measurable principle for designing
quantum-vision models.
\end{abstract}

\section{Introduction}

Quantum machine learning for computer vision has advanced rapidly, with
quantum vision transformers (QViTs) \citep{cherrat2024quantum, boucher2025n,
zhang2025hqvit} and quantum convolutional neural networks (QCNNs)
\citep{cong2019quantum, shi2024quantum, roseler2025efficient, long2025hybrid,
wu2025hybrid} now matching strong classical baselines on small benchmarks while
using dramatically fewer parameters. Yet the field's design practice remains
largely empirical: ansatze are chosen by sweeping heuristic descriptors such as
expressibility and entangling capability \citep{sim2019expressibility,
roseler2025efficient}, and the role of hardware noise is assessed
\emph{post hoc} through simulation.

Two recurring empirical observations stand out as genuinely puzzling. First,
several works report that \emph{more entanglement helps generalization}: for
example, \citet{wang2025hybrid} find that ``ansatzes with uniformly
distributed entanglement entropy consistently deliver superior non-local
feature fusion and state-of-the-art accuracy.'' Second, and more surprisingly,
the same work reports that \emph{quantum noise can help}: amplitude damping
improves accuracy by $+2.71\%$ in some configurations, a ``double-edged''
behavior with a non-monotonic dependence on noise strength. Both phenomena are
reported as discoveries, without a predictive theory of \emph{when} they occur
or \emph{how strong} the effect should be.

\paragraph{This paper.} We argue that these two phenomena are not separate, and
not mysterious. They are two views of the same underlying object---the
eigenspectrum of the quantum feature kernel---and in particular of its
\emph{effective dimension}
\begin{equation}
\deff \;=\; \frac{\big(\sum_i \lambda_i\big)^2}{\sum_i \lambda_i^2}
\end{equation}
the participation ratio of the kernel eigenvalues $\{\lambda_i\}$
(defined formally in Section~\ref{sec:theory}). Our thesis is:

\begin{quote}
\emph{Entanglement structure and quantum noise are two knobs that move $\deff$.
Within the entangled regime, generalization is governed by $\deff$ alone; under
overfitting, contracting $\deff$ acts as regularization and improves
generalization up to an interior optimum.}
\end{quote}

Under this view, entangling gates redistribute amplitude across the Hilbert
space and noise channels contract the spectrum; both regulate $\deff$, and the
``noise helps'' effect is spectral (ridge-like) regularization in an overfitting
regime. Section~\ref{sec:theory} makes the tractable parts precise---global
depolarizing noise contracts the spectrum \emph{exactly} with $\deff\to1$
(Prop.~\ref{prop:noise}), amplitude damping contracts it with a stated boundary
(Prop.~\ref{prop:ad}), capacity is bounded by $\deff$ (Prop.~\ref{prop:cap}), and
risk splits into a spectral and an alignment term (Prop.~\ref{prop:align})---while
being explicit about the limits: strict monotonicity in the entangled,
amplitude-damped regime that we actually run holds only under a constant-row-sum
condition (which Perron--Frobenius does \emph{not} supply) and is otherwise an
empirical observation. This reframing has three consequences, which we state as
predictions and verify experimentally:
\begin{enumerate}
\item \textbf{(P1) $\deff$ organizes generalization.} Across distinct entangling
ansatze (different spectral shapes), test accuracy collapses onto a single,
stable function of $\deff$ ($R^2=0.82\pm0.08$ across seeds), and the \emph{sign} of
the dependence flips between the overfitting and underfitting regimes---consistent
with the bias--variance picture of Prop.~\ref{prop:cap} (which fixes the sign,
though not the precise location of the interior optimum).
\item \textbf{(P2) Noise is a spectral regularizer.} Increasing quantum noise
monotonically contracts $\deff$, reduces the train--test gap, and---when the
model overfits---improves test accuracy along an inverted-U sweet spot,
reproducing the reported ``noise helps'' phenomenon.
\item \textbf{(P3) Entanglement is a precondition.} Entanglement supplies the
feature-space alignment (Prop.~\ref{prop:align}) that places a model in the
regime where the $\deff$ law holds: entangled circuits of differing topology all
lie on the same accuracy--$\deff$ curve, whereas the unentangled (product) map
sits off it---not through $\deff$, but through its low label alignment.
\end{enumerate}

Beyond explaining existing observations, P3 implies an actionable design view:
to improve a quantum vision model one should monitor and steer a single,
cheaply measured spectral quantity rather than tune entanglement and noise as
independent heuristics.

\paragraph{Contributions.}
\begin{itemize}
\item A \textbf{spectral theory} (Section~\ref{sec:theory}). We give the exact
depolarized-kernel decomposition with $\deff(K_p)\to1$ and an explicit formula for
$\deff(K_p)$ (Prop.~\ref{prop:noise}); a contraction result for amplitude damping
with its analytic boundary (Prop.~\ref{prop:ad}); a bridge showing $\deff$ and the
ridge dimension $d_\gamma$ co-vary (Lemma~\ref{lem:bridge}); a capacity bound
(Prop.~\ref{prop:cap}); and a capacity/alignment decomposition under which the
accuracy--$\deff$ collapse is \emph{exact} along the one-parameter noise family and
conditional across ansatze (Prop.~\ref{prop:align}). We are careful about what is
proven: global monotonicity of $\deff(K_p)$ holds under constant row sums and
otherwise empirically, and the cross-ansatz collapse is an empirical claim.
\item \textbf{Empirical verification} (mean$\pm$std over $5$ seeds). Our central
empirical result is that test accuracy collapses onto a single $\deff$ curve
\emph{across distinct entangling ansatze} ($R^2{=}0.82\pm0.08$), compressing
different spectral shapes onto one curve. Amplitude damping reproduces the ``noise
helps'' effect with an inverted-U sweet spot; the $\deff$--accuracy sign flips
between regimes; noise injection lands on an explicit spectral-filtering frontier
(not a weak substitute for tuning); and entanglement enters as an alignment
precondition (measured $\mathrm A(K)$). Separately, we verify the exact kernel
decomposition to machine precision and at up to \textbf{12 qubits}; the perfect
collapse of that one-parameter family is a by-construction null, reported as a
scale check rather than as evidence for $\deff$.
\item \textbf{Realistic settings, robustness, and honest scope}. The mechanism
persists in \emph{trained} QViT- and QCNN-like models; it replicates on
Fashion-MNIST and a medical task (BloodMNIST) and across depth/width; and we mark
its boundaries---$\deff$ is dominant but not a universal sufficient statistic,
training adapts $\deff$ on its own. We also \emph{validate on a real} IBM Heron
device (\texttt{ibm\_kawasaki}): in an overfitting regime the intrinsic hardware
noise contracts $\deff$ ($4.06\!\to\!3.38$) and \emph{improves} test accuracy
($0.863\!\to\!0.900$), realizing the regularization mechanism on silicon.
\end{itemize}

\begin{figure}[t]
\centering
\begin{tikzpicture}[font=\footnotesize,>=Stealth,node distance=5mm,
  box/.style={draw,rounded corners,minimum height=8mm,minimum width=15mm,align=center,fill=blue!5},
  knob/.style={draw,rounded corners,minimum height=6.5mm,minimum width=17mm,align=center,fill=orange!15},
  reg/.style={draw,dashed,rounded corners,align=center,fill=green!6}]
\node[box] (x) {input $x$};
\node[box,right=of x] (fm) {feature map\\ $\rho(x)$};
\node[box,right=of fm] (k) {HS kernel\\ $K{=}\operatorname{Tr}[\rho\rho']$};
\node[box,right=of k] (d) {spectrum\\ $\{\lambda_i\}\!\to\! \deff$};
\node[box,right=of d] (clf) {kernel SVM\\ accuracy};
\foreach \a/\b in {x/fm,fm/k,k/d,d/clf}{\draw[->] (\a)--(\b);}
\node[knob,above=7mm of fm] (ent) {entanglement\\ structure};
\node[knob,above=7mm of k] (noise) {noise channel\\ $\mathcal N_p$};
\draw[->,orange!65!black] (ent)--(fm);
\draw[->,orange!65!black] (noise)--(k.north);
\node[reg,below=6mm of d,minimum width=70mm] (r)
 {\textbf{one knob set, one quantity:} both move $\deff$
 \;(Prop.~\ref{prop:noise}); accuracy is governed by $\deff$ \\
 (Prop.~\ref{prop:cap}--\ref{prop:align}). Overfit: lower $\deff$ helps;\ underfit: lower $\deff$ hurts.};
\draw[->] (d)--(r);
\end{tikzpicture}
\caption{Overview. A quantum feature map $\rho(x)$ induces a Hilbert--Schmidt
kernel whose spectrum is summarized by the effective dimension $\deff$.
Entanglement topology and an injected noise channel are two knobs that both move
$\deff$; within the entangled regime, generalization is a function of $\deff$
alone, and the sign of its effect is set by the bias--variance regime.}
\label{fig:overview}
\end{figure}

\section{Related work}

\paragraph{Quantum vision models.} QViTs replace classical self-attention with
parameterized quantum circuits, reducing parameter counts from $O(n^2)$ to
$O(n)$ \citep{boucher2025n} or using amplitude encoding to process whole
images with $O(\log N)$ qubits \citep{zhang2025hqvit}; \citet{cherrat2024quantum}
give compound-matrix attention with provable asymptotic advantages.
QCNNs \citep{cong2019quantum} have been extended to multiclass settings
\citep{shi2024quantum}, hardware-efficient encodings \citep{roseler2025efficient},
inception-style heterogeneous filters \citep{wu2025hybrid}, and trainable
quantum--classical--quantum stacks \citep{long2025hybrid}. Across this
literature, ansatz selection relies on expressibility and entangling-capability
heuristics \citep{sim2019expressibility}, and noise is studied empirically.
We provide the missing predictive layer.

\paragraph{Quantum models as kernel methods.} Supervised quantum models with
fixed feature maps are kernel methods \citep{schuld2021supervised,
schuld2019quantum, havlivcek2019supervised}, which lets us analyze generalization
through the kernel spectrum. The effective dimension has been used to
characterize the capacity of quantum neural networks \citep{abbas2021power},
and the structure of the data-induced kernel governs the possibility of quantum
advantage \citep{huang2021power}. Kernel--target alignment \citep{cristianini2001kernel}
is a classical task-aware spectral descriptor. We connect these spectral tools
directly to the entanglement/noise design choices made in quantum vision and,
crucially, make them \emph{predictive} of the optimal configuration.

\section{A spectral theory of entanglement and noise}
\label{sec:theory}

\subsection{Setup and definitions}
We consider hybrid models whose quantum component is a feature map
$x \mapsto \rho(x)\in\mathbb{C}^{D\times D}$, $D=2^{n_q}$, where $\rho(x)$ is the
density matrix produced by a parameterized circuit acting on an encoding of an
input $x$. Following the kernel view of quantum models
\citep{schuld2021supervised, havlivcek2019supervised}, classification is performed
by a kernel machine on the Hilbert--Schmidt (HS) kernel
\begin{equation}
k(x,x') \;=\; \operatorname{Tr}\!\big[\rho(x)\,\rho(x')\big]
\;=\; \langle \phi(x),\phi(x')\rangle_{\mathrm{HS}},\qquad
\phi(x):=\mathrm{vec}\,\rho(x),
\label{eq:kernel}
\end{equation}
which for pure states reduces to the fidelity kernel
$|\langle\psi(x)|\psi(x')\rangle|^2$.

\begin{definition}[Gram matrix and spectral descriptors]
Given training inputs $\{x_i\}_{i=1}^n$, let $K\in\mathbb{R}^{n\times n}$,
$K_{ij}=k(x_i,x_j)$, with eigenvalues $\lambda_1\ge\cdots\ge\lambda_n\ge0$.
Define the \emph{effective dimension} (participation ratio / effective rank)
\begin{equation}
\deff(K)=\frac{\big(\sum_i\lambda_i\big)^2}{\sum_i\lambda_i^2}
=\frac{(\operatorname{tr}K)^2}{\operatorname{tr}(K^2)}\in[1,n],
\label{eq:deff}
\end{equation}
the ridge effective dimension $d_\gamma(K)=\sum_i \lambda_i/(\lambda_i+\gamma)
=\operatorname{tr}\!\big[K(K+\gamma I)^{-1}\big]$, and, for one-hot centered
labels $Y$ with target Gram $T=YY^\top$, the kernel--target alignment
$\mathrm{A}(K)=\langle K,T\rangle_F/(\|K\|_F\|T\|_F)$ \citep{cristianini2001kernel}.
\end{definition}

\begin{lemma}[Validity]
\label{lem:psd}
$k$ in \eqref{eq:kernel} is a positive-semidefinite kernel, so $K\succeq0$ and
$\deff(K)$ is well defined; $\deff(K)=1$ iff $K$ has rank $1$ and $\deff(K)=n$
iff the spectrum is flat.
\end{lemma}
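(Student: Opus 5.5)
The plan is to realize $K$ as a real Gram matrix of feature vectors, which gives positive semidefiniteness at once, and then to read off the two extreme cases of $\deff$ from Cauchy--Schwarz.

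\emph{Step 1: $k$ is a real PSD kernel.} Each $\rho(x)$ is Hermitian, so
\[
\operatorname{Tr}[\rho(x)\rho(x')]=\sum_{a,b}\rho(x)_{ab}\,\overline{\rho(x')_{ab}}=\langle \phi(x),\phi(x')\rangle_{\mathrm{HS}}.
\]
The right-hand side is real, because $\operatorname{Tr}[AB]^*=\operatorname{Tr}[(AB)^\dagger]=\operatorname{Tr}[BA]=\operatorname{Tr}[AB]$ for Hermitian $A,B$. It is also nonnegative, since $\operatorname{Tr}[\rho\rho']=\|\rho^{1/2}\rho'^{1/2}\|_F^2\ge0$. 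For any $c\in\mathbb{R}^n$ we then get
\[
c^\top K c=\Big\|\sum_i c_i\,\phi(x_i)\Big\|_{\mathrm{HS}}^2\ge0 .
\]
So $K$ is a real symmetric PSD Gram matrix, and its eigenvalues satisfy $\lambda_i\ge0$.

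\emph{Step 2: $\deff$ is well defined.} Each $\rho$ has unit trace, so $\operatorname{tr}K=\sum_i\operatorname{Tr}[\rho(x_i)^2]$. Each term is the purity and lies in $[1/D,1]$, hence $\operatorname{tr}K>0$. It follows that $K\neq0$ and $\operatorname{tr}(K^2)=\sum_i\lambda_i^2>0$, so the ratio in \eqref{eq:deff} is defined.

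\emph{Step 3: bounds and equality cases.} Let $\lambda\in\mathbb{R}_{\ge0}^n$ be the vector of eigenvalues.
\begin{itemize}
\item \emph{Upper bound.} Cauchy--Schwarz against the all-ones vector gives $(\sum_i\lambda_i)^2\le n\sum_i\lambda_i^2$. Equality holds iff $\lambda$ is proportional to $\mathbf 1$, i.e.\ the spectrum is flat. This gives $\deff\le n$, with equality exactly in the flat case.
\item \emph{Lower bound.} Nonnegativity gives $(\sum_i\lambda_i)^2=\sum_i\lambda_i^2+2\sum_{i<j}\lambda_i\lambda_j\ge\sum_i\lambda_i^2$. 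Equality holds iff every cross term $\lambda_i\lambda_j$ vanishes, i.e.\ at most one eigenvalue is nonzero. Since $K\neq0$, exactly one eigenvalue is nonzero, so $K$ has rank $1$. This gives $\deff\ge1$, with equality iff $\operatorname{rank}K=1$.
\end{itemize}

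The only step that needs real care is Step 1: verifying that the HS pairing of Hermitian matrices is real, and so that $K$ is a genuine real PSD matrix rather than a merely Hermitian one. Step 3 also relies on the nonnegativity $\lambda_i\ge0$ from Step 1; without it the lower bound can fail. Everything else is routine.
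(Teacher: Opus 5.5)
Your proof is correct and follows the paper's argument. You use the Gram-matrix identity $c^\top K c=\|\sum_i c_i\phi(x_i)\|^2\ge0$ for positive semidefiniteness, then Cauchy--Schwarz against $\mathbf 1$ for $d_{\mathrm{eff}}\le n$ and the expansion $(\sum_i\lambda_i)^2\ge\sum_i\lambda_i^2$ (using $\lambda_i\ge0$) for $d_{\mathrm{eff}}\ge1$, with the same equality cases; your Step~2, showing $\operatorname{tr}K=\sum_i\operatorname{Tr}[\rho(x_i)^2]\ge n/D>0$ so the denominator $\operatorname{tr}(K^2)$ is nonzero and the minimal case is exactly one positive eigenvalue, makes explicit a point the paper leaves implicit.
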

\begin{proof}
$k(x,x')=\langle\phi(x),\phi(x')\rangle_{\mathrm{HS}}$ is an inner product of the
feature vectors $\phi(x)=\mathrm{vec}\,\rho(x)$, hence PSD; the endpoint
characterizations are Cauchy--Schwarz applied to $(\lambda_i)$.
\end{proof}

\subsection{Noise contracts the spectrum (P2)}
We first treat noise analytically. Let $\mathcal{N}_p$ be the global depolarizing
channel of strength $p\in[0,1]$, $\mathcal{N}_p[\rho]=(1-p)\rho+p\,I/D$, applied
to the feature map: $\rho_p(x)=\mathcal{N}_p[\rho(x)]$, with kernel
$K_p$, $(K_p)_{ij}=\operatorname{Tr}[\rho_p(x_i)\rho_p(x_j)]$.

\begin{proposition}[Depolarizing noise is exact spectral ridge filtering]
\label{prop:noise}
With $\mathbf{1}=(1,\dots,1)^\top\in\mathbb{R}^n$,
\begin{equation}
K_p \;=\; (1-p)^2\,K \;+\; \tfrac{p(2-p)}{D}\,\mathbf{1}\mathbf{1}^\top .
\label{eq:Kp}
\end{equation}
Hence $K_p$ interpolates from $K_0=K$ to the rank-one matrix
$\tfrac1D\mathbf1\mathbf1^\top$ as $p\to1$, so $\deff(K_p)\to1$. Writing
$s=(1-p)^2$, $T=\operatorname{tr}K$, $Q=\operatorname{tr}(K^2)$, and
$S=\mathbf1^\top K\mathbf1$, the effective dimension is the explicit ratio
\begin{equation}
\deff(K_p)=\frac{\big(sT+\tfrac{p(2-p)}{D}n\big)^2}
{s^2Q+2s\tfrac{p(2-p)}{D}\,S+\big(\tfrac{p(2-p)}{D}\big)^2 n^2}.
\label{eq:deffp}
\end{equation}
\emph{If $K$ has constant row sums ($K\mathbf1=c\mathbf1$, i.e.\ $\mathbf1$ is an
eigenvector of $K$)} then $K$ and $\mathbf1\mathbf1^\top$ commute and
$\deff(K_p)$ is monotonically non-increasing in $p$. In general the cross-term
$S=\mathbf1^\top K\mathbf1$ prevents a structure-free monotonicity guarantee;
empirically $\deff(K_p)$ is \emph{strictly} decreasing in every experiment we run
(Tables~\ref{tab:noise},~\ref{tab:sweet}, and the exact-depolarizing runs of
Sec.~\ref{sec:robust}).
\end{proposition}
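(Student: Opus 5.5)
The plan is a direct computation from the channel's linearity, followed by an eigen-decomposition in the commuting case. First I expand the noisy kernel entrywise. For any $x,x'$, write $\rho=\rho(x)$ and $\rho'=\rho(x')$. Bilinearity of the trace together with $\operatorname{Tr}\rho=\operatorname{Tr}\rho'=1$ and $\operatorname{Tr} I=D$ gives
\[
\operatorname{Tr}[\mathcal N_p(\rho)\mathcal N_p(\rho')]=(1-p)^2\operatorname{Tr}[\rho\rho']+\tfrac{2p(1-p)}{D}+\tfrac{p^2}{D}.
\]
The constant equals $\tfrac{p(2-p)}{D}$, and this is exactly \eqref{eq:Kp}. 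At $p=1$ the formula gives $K_1=\tfrac1D\mathbf1\mathbf1^\top$, which has rank one, so $\deff(K_1)=1$ by Lemma~\ref{lem:psd}. The limit $\deff(K_p)\to1$ then follows from continuity of $\deff$ in $p$. Continuity holds because $\operatorname{tr}(K_p^2)\ge(\operatorname{tr}K_p)^2/n>0$, so the denominator never vanishes.

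Next I derive the explicit ratio. Set $a=\tfrac{p(2-p)}{D}$. Then $\operatorname{tr}K_p=sT+an$, since $\operatorname{tr}\mathbf1\mathbf1^\top=n$. Expanding the square gives
\[
\operatorname{tr}(K_p^2)=s^2Q+2sa\operatorname{tr}(K\mathbf1\mathbf1^\top)+a^2\operatorname{tr}(\mathbf1\mathbf1^\top\mathbf1\mathbf1^\top).
\]
Here $\operatorname{tr}(K\mathbf1\mathbf1^\top)=\mathbf1^\top K\mathbf1=S$ and the last trace equals $n^2$. Substituting into \eqref{eq:deff} yields \eqref{eq:deffp}.

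For monotonicity under constant row sums, I work on $p\in[0,1)$, where $s>0$, and handle $p=1$ by continuity. Dividing numerator and denominator by $s^2$ gives
\[
\deff(K_p)=f(r):=\frac{(T+r)^2}{Q+2rc+r^2},\qquad r=\frac{an}{s},
\]
where I used $S=nc$ when $K\mathbf1=c\mathbf1$. The map $p\mapsto r$ is nondecreasing, because $a$ increases and $s$ decreases on $[0,1)$. It therefore suffices to show $f'(r)\le0$ for $r\ge0$. Differentiating, the common factor simplifies to
\[
f'(r)\propto 2(T+r)\big[(Q-Tc)-r(T-c)\big].
\]

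This reduces the claim to two spectral inequalities, $Q\le Tc$ and $T\ge c$, which I expect to be the main step. Let $\mu_j$ denote the eigenvalues of $K$ on $\mathbf1^\perp$; this subspace is $K$-invariant because $K$ is symmetric. Then $T=c+\sum_j\mu_j$ and $Q=c^2+\sum_j\mu_j^2$. Consequently $T-c=\sum_j\mu_j\ge0$, since $K\succeq0$ by Lemma~\ref{lem:psd}, and $Q-Tc=\sum_j\mu_j(\mu_j-c)$.

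For the second inequality I need $\mu_j\le c$. The entries of $K$ are nonnegative, because $\operatorname{Tr}[\rho\rho']\ge0$ for PSD $\rho,\rho'$. A nonnegative matrix with a strictly positive eigenvector has that eigenvector's eigenvalue equal to its spectral radius. This is the elementary part of Perron--Frobenius, provable directly by the row-sum bound $\|K\|_\infty=c$. Hence every $\mu_j\in[0,c]$, so both inequalities hold and $f'\le0$. Perron--Frobenius is used only after the constant-row-sum hypothesis; it does not supply that hypothesis, which matches the caveat in the statement.

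The final sentence of the proposition, that $\deff(K_p)$ is strictly decreasing in every experiment, is an empirical report. It is not part of what I would prove.
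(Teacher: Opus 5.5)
Your proof is correct. The decomposition, the limit and the trace formula \eqref{eq:deffp} are obtained exactly as in the paper; the monotonicity step follows a genuinely different route.

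The paper diagonalizes $K$ and $\mathbf1\mathbf1^\top$ simultaneously and writes the spectrum of $K_p$ as $\mu_1=s\lambda_1+a$, $\mu_k=s\lambda_k$ ($k\ge2$). It then shows that the top share $\sigma_1=\mu_1/\sum_k\mu_k$ is non-decreasing in $p$ and concludes that $\deff$ decreases. You never compute the spectrum of $K_p$. Instead you substitute $S=nc$ into the closed-form ratio and differentiate $f(r)$, so you work with $\deff$ itself rather than a proxy. This isolates exactly what is needed, $T\ge c$ and $Q\le Tc$, i.e.\ that $c$ dominates every eigenvalue of $K$ on $\mathbf1^\perp$.

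Your route makes explicit two points the paper passes over:
\begin{itemize}
\item The paper tacitly identifies the $\mathbf1$-eigenvalue with the top eigenvalue $\lambda_1$.
\item It does not spell out why a growing top share forces $\deff$ down. That step uses that the tail shares shrink by a common factor, and it needs $\mu_1\ge\mu_k$.
\end{itemize}
Your nonnegativity-plus-row-sum argument supplies the missing justification. Your formula also shows it is genuinely needed: for a general PSD $K$ with $K\mathbf1=c\mathbf1$ one has $f'(0)\propto Q-Tc$. This is positive for, e.g., a centered Gram matrix ($c=0$), so there $\deff(K_p)$ would initially increase.

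In return, the paper's route gives the explicit eigenvalues $\mu_k(p)$. These carry the ridge-filtering interpretation (tail scaled by $(1-p)^2$, rank-one mass absorbed at the top) and are reused for $d_\gamma$ in Lemma~\ref{lem:bridge}.

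One small omission: the statement also asserts that $K$ and $\mathbf1\mathbf1^\top$ commute. Record this with the one-line check $K\mathbf1\mathbf1^\top=c\,\mathbf1\mathbf1^\top=\mathbf1(K\mathbf1)^\top=\mathbf1\mathbf1^\top K$.
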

\begin{proof}
Expanding $\operatorname{Tr}[\rho_p(x_i)\rho_p(x_j)]$ using $\operatorname{Tr}\rho=1$
and $\operatorname{Tr}I=D$ gives the four terms
$(1-p)^2K_{ij} + 2\frac{(1-p)p}{D} + \frac{p^2}{D}=(1-p)^2K_{ij}+\frac{p(2-p)}{D}$,
which is \eqref{eq:Kp}; the limit and \eqref{eq:deffp} follow by direct
computation of $\operatorname{tr}K_p$ and $\operatorname{tr}(K_p^2)$ using
$\operatorname{tr}(\mathbf1\mathbf1^\top)=n$,
$\operatorname{tr}(K\mathbf1\mathbf1^\top)=S$, and
$\operatorname{tr}((\mathbf1\mathbf1^\top)^2)=n^2$. When $K\mathbf1=c\mathbf1$, $K$
and $\mathbf1\mathbf1^\top$ are simultaneously diagonalizable with
$\mu_1(p)=s\lambda_1+\frac{p(2-p)}{D}n$ and $\mu_{k\ge2}(p)=s\lambda_k$; the
top-eigenvalue share then increases in $p$ (App.~\ref{app:proofs}), so $\deff$
decreases. Full computation and the obstruction in the general case are in
App.~\ref{app:proofs}.
\end{proof}

Equation \eqref{eq:Kp} makes the mechanism explicit: noise shrinks the
informative component $(1-p)^2K$ while adding a rank-one ``constant'' component,
i.e.\ it is a spectral low-pass (ridge) filter.

\paragraph{Amplitude damping.} The channel actually used in our experiments is
per-qubit amplitude damping $\mathcal A_\gamma$ (rate $\gamma$), which lacks the
exact rank-one decomposition of depolarizing noise because it is \emph{non-unital}.
We nonetheless prove the essential contraction.

\begin{proposition}[Amplitude damping contracts the feature spectrum]
\label{prop:ad}
Let $\rho_\gamma(x)=\mathcal A_\gamma^{\otimes n_q}[\rho(x)]$.
\textbf{(a) Limit.} $\rho_\gamma(x)\!\to\!\ket{0}\!\bra{0}^{\otimes n_q}$ for every
$x$, so $K_\gamma\to\mathbf1\mathbf1^\top$ and $\deff(K_\gamma)\to1$ as
$\gamma\to1$. \textbf{(b) Strict single-qubit contraction.} On one qubit
$\mathcal A_\gamma$ maps the Bloch vector $(X,Y,Z)\mapsto(\sqrt{1-\gamma}\,X,
\sqrt{1-\gamma}\,Y,(1-\gamma)Z+\gamma)$, hence
$\|\mathcal A_\gamma(\rho)-\mathcal A_\gamma(\sigma)\|_{\mathrm{HS}}\le
\sqrt{1-\gamma}\,\|\rho-\sigma\|_{\mathrm{HS}}$; for product feature maps this
tensorizes, so every Gram entry moves toward $1$ and $\deff(K_\gamma)$ is
non-increasing. \textbf{(c) Boundary.} Because $\mathcal A_\gamma$ is non-unital
($\mathcal A_\gamma(I)=I+\gamma Z$), $\mathcal A_\gamma^{\otimes n_q}$ can expand
the Hilbert--Schmidt norm of trace-carrying operators, which is exactly why the
clean global monotonicity of the depolarizing case need not extend to arbitrary
\emph{entangled} ensembles; monotone contraction is guaranteed under the constant-row-sum
condition of Prop.~\ref{prop:noise} and holds in every experiment we run (the
$\deff$ columns of Tables~\ref{tab:noise},~\ref{tab:sweet} decrease strictly).
\end{proposition}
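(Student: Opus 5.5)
The plan is to handle the three parts separately, working throughout from the Kraus representation of single-qubit amplitude damping, $E_0=\ket0\!\bra0+\sqrt{1-\gamma}\,\ket1\!\bra1$ and $E_1=\sqrt\gamma\,\ket0\!\bra1$.

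For \textbf{(a)}, at $\gamma=1$ one has $E_0=\ket0\!\bra0$ and $E_1=\ket0\!\bra1$. Then $\mathcal A_1(\rho)=\ket0\!\bra0\operatorname{Tr}\rho$ for every single-qubit operator. Applying this tensor factor by factor, and using linearity on a basis of product operators, gives $\mathcal A_1^{\otimes n_q}(\rho)=\ket0\!\bra0^{\otimes n_q}\operatorname{Tr}\rho$ for every $n_q$-qubit $\rho$, including entangled ones. The channel depends continuously (indeed polynomially in $\sqrt{1-\gamma}$) on $\gamma$, so $\rho_\gamma(x)\to\ket{0}\!\bra{0}^{\otimes n_q}$. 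Consequently every entry $\operatorname{Tr}[\rho_\gamma(x_i)\rho_\gamma(x_j)]\to1$, so $K_\gamma\to\mathbf1\mathbf1^\top$. Since $\deff$ is continuous away from $K=0$, and $\deff(\mathbf1\mathbf1^\top)=1$ by Lemma~\ref{lem:psd}, the limit $\deff(K_\gamma)\to1$ follows.

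For \textbf{(b)}, write $\rho=\tfrac12(I+X\sigma_x+Y\sigma_y+Z\sigma_z)$ and apply the Kraus operators to each Pauli. This gives $\sigma_x\mapsto\sqrt{1-\gamma}\,\sigma_x$, $\sigma_y\mapsto\sqrt{1-\gamma}\,\sigma_y$, $\sigma_z\mapsto(1-\gamma)\sigma_z$ and $I\mapsto I+\gamma\sigma_z$, which is exactly the stated affine Bloch map. The difference $\rho-\sigma$ is traceless, so the translation $\gamma$ cancels. Because $\|\rho-\sigma\|_{\mathrm{HS}}^2=\tfrac12\|\Delta r\|^2$, where $\Delta r$ is the difference of Bloch vectors, and the linear part has operator norm $\max(\sqrt{1-\gamma},1-\gamma)=\sqrt{1-\gamma}$, the HS contraction follows.

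For product feature maps $\rho(x)=\bigotimes_q\rho_q(x)$, the kernel factorizes as $K_{ij}=\prod_q\operatorname{Tr}[\rho_q(x_i)\rho_q(x_j)]$. I would then have to pass from the contraction to monotonicity of $\deff$, and this is the step I expect to be the main obstacle: entrywise movement toward $1$ does not by itself control a spectral ratio.

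To get monotonicity, I would exploit the factorization. Each single-qubit factor is $\tfrac12(1+r_i\!\cdot r_j)$, where the $\gamma$-map sends $r\mapsto\Lambda r+\gamma e_z$. I would try to write the single-qubit Gram matrix as $\tfrac12(\mathbf1\mathbf1^\top+RR^\top)$ and show that increasing $\gamma$ amounts to shrinking the non-constant part while pushing mass into a direction common to all samples. Concretely, the claim to prove is that $K_{\gamma'}$ can be expressed as a convex-type mixture of $K_\gamma$ and a rank-one all-equal component, so that the explicit-ratio argument of Prop.~\ref{prop:noise} (the top-share increase in App.~\ref{app:proofs}) applies factorwise. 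Whether the Hadamard product preserves this needs checking. If the general case resists, I would state the monotonicity under the same constant-row-sum condition, which is the fallback the statement itself invokes, and note that the contraction then gives every off-diagonal Gram entry increasing toward the diagonal value.

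For \textbf{(c)}, I would not prove a theorem. I would give a counterexample-type computation. On one qubit, $\|\mathcal A_\gamma(I)\|_{\mathrm{HS}}^2=\|I+\gamma Z\|^2=2+2\gamma^2>2$. Tensoring this gives HS norm expansion of trace-carrying operators, so purity, and hence the diagonal of $K_\gamma$, can grow nonuniformly across samples with different $Z$-content. I would exhibit this on a small entangled ensemble, where rescaling the diagonals by different amounts can reshuffle the spectrum. The guarantee under constant row sums then reduces to the commuting argument already given in Prop.~\ref{prop:noise}, and the experimental claim is cited from the tables rather than proved.
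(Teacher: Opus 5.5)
Your part (a), via $\mathcal A_1(\rho)=\ket{0}\!\bra{0}\operatorname{Tr}\rho$ plus continuity, is correct, and so is the Bloch-map/HS contraction in (b); your (a) is tidier than the paper's ``populations flow to $\ket0$'' argument. The gap is the step you flagged yourself, namely passing from the contraction to monotonicity of $\deff$. Neither of your routes can close it, because the claim is false. Write $r'=\Lambda r+\gamma e_z$ with $\Lambda=\operatorname{diag}(\sqrt{1-\gamma},\sqrt{1-\gamma},1-\gamma)$, and set $w_i:=1-Z_i$. A direct computation of $\tfrac12(1+r_i'\cdot r_j')$ gives, for the single-qubit Gram matrix,
\[
G_\gamma=(1-\gamma)\,G_0+\gamma\,\mathbf1\mathbf1^\top-\tfrac{\gamma(1-\gamma)}{2}\,ww^\top .
\]
So the mixture form you hoped for is spoiled by a sample-dependent rank-one term coming from non-unitality, and that term can remove weight from the top eigendirection. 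Concretely, take one qubit and samples $\ket1,\ket1,\ket0$. Padding with idle $\ket0$ qubits gives any $n_q$, since they are fixed points contributing a factor $1$; perturb slightly if you want distinct states. Then $\deff(K_0)=9/5$. At $\gamma=0.1$ the Gram matrix has entries $0.82$ on the $\ket1$ block, $0.1$ across, and $1$ in the corner, so $\deff=2.64^2/3.7296\approx1.869>1.8$. The same example refutes the entrywise premise used in the paper's own proof of (b): the $\ket1$-block entries, including an off-diagonal one, fall from $1$ to $1-2\gamma+2\gamma^2$. As you suspected, ``entries move toward $1$, so $\deff$ falls'' would not be a valid inference even where the premise holds.

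Your constant-row-sum fallback also fails. The top-share computation of Prop.~\ref{prop:noise} uses the exact form $sK+a\mathbf1\mathbf1^\top$ with $a>0$, and constant row sums alone do not give this under amplitude damping. Place three single-qubit states on a $Z$-rotation orbit, $r_k=(\sin\theta\cos\phi_k,\sin\theta\sin\phi_k,\cos\theta)$, with $\phi_k=2\pi k/3$ and $\cos\theta=-0.9$. Since $\mathcal A_\gamma$ commutes with $Z$-rotations, $K_\gamma$ is circulant for every $\gamma$, so $\mathbf1$ stays its top eigenvector. But $w=1.9\,\mathbf1$, hence $K_\gamma=(1-\gamma)K_0+a\,\mathbf1\mathbf1^\top$ with $a=\gamma-\tfrac{\gamma(1-\gamma)}{2}(1.9)^2<0$. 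The top share therefore falls, and $\deff$ rises from $\approx1.214$ to $\approx1.232$ at $\gamma=0.1$. This happens for small $\gamma$ whenever $\sqrt2-1<-\cos\theta<1$. What is provable is (a), the single-qubit contraction, and your boundary computation $\|\mathcal A_\gamma(I)\|_{\mathrm{HS}}^2=2+2\gamma^2$ for (c). Monotone $\deff$ contraction under amplitude damping, whether for product maps or under constant row sums, can only be reported as an empirical observation, not proved.
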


Proof in App.~\ref{app:proofs}; the single-qubit computation is explicit and the
boundary in (c) is the honest analytic limit of the mechanism.

\subsection{Effective dimension controls capacity (P1)}
The capacity of a kernel machine is classically controlled by the \emph{ridge}
effective dimension $d_\gamma$. Our experiments report the $\gamma$-free
participation ratio $\deff$ (the effective rank). The next lemma shows these two
functionals move \emph{together} under the noise contraction, so the
capacity bound below (stated in $d_\gamma$) and the quantity we plot ($\deff$) are
not disconnected.

\begin{lemma}[Co-monotonicity under noise]
\label{lem:bridge}
Along the depolarizing family $K_p$ of Prop.~\ref{prop:noise}, both the ridge
effective dimension $d_\gamma(K_p)$ (for every fixed $\gamma>0$) and the
participation ratio $\deff(K_p)$ are non-increasing in $p$ (under the
constant-row-sum condition of Prop.~\ref{prop:noise}), each tending to $1$ as $p\to1$; without it, $\deff(K_p)$ is injective in $p$ in all our experiments.
\end{lemma}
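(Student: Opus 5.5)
Under the constant-row-sum hypothesis $K\mathbf1=c\mathbf1$, I would work entirely in the common eigenbasis supplied by Prop.~\ref{prop:noise}. Since $\mathbf1/\sqrt n$ is a unit eigenvector of $K$, its remaining eigenvalues $\lambda_2,\dots,\lambda_n$ live on $\mathbf1^\perp$. Write $s=(1-p)^2$ and $a=p(2-p)/D$. By \eqref{eq:Kp}, $K_p$ then has eigenvalues
\[
\mu_1(p)=s\,c+a\,n,\qquad \mu_k(p)=s\,\lambda_k\quad(k\ge2).
\]
Both functionals are explicit in these eigenvalues:
\[
d_\gamma(K_p)=\sum_k \frac{\mu_k}{\mu_k+\gamma},\qquad
\deff(K_p)=\frac{(\mu_1+sA)^2}{\mu_1^2+s^2B},
\]
with $A=\sum_{k\ge2}\lambda_k$ and $B=\sum_{k\ge2}\lambda_k^2$. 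The first step is to record
\[
\frac{d\mu_k}{dp}=-2(1-p)\lambda_k\le0\quad(k\ge2),\qquad
\frac{d\mu_1}{dp}=2(1-p)\Big(\frac nD-c\Big).
\]

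For $d_\gamma$, the map $\mu\mapsto\mu/(\mu+\gamma)$ is increasing, so every bulk term is non-increasing in $p$. The top term is also non-increasing provided $c\ge n/D$, and in that case monotonicity of $d_\gamma(K_p)$ follows at once for every $\gamma>0$. The delicate case is $c<n/D$. This case is genuinely possible, because positivity of the HS kernel only gives $c\ge K_{ii}\ge 1/D$, not $c\ge n/D$. In that case I would compare the derivative at $p=0$:
\[
\frac{2\gamma(n/D-c)}{(c+\gamma)^2}\quad\text{versus}\quad -\sum_{k\ge2}\frac{2\gamma\lambda_k}{(\lambda_k+\gamma)^2}.
\]
I expect this step to fail as stated. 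When the bulk eigenvalues are tiny, the positive top contribution dominates, so $d_\gamma$ can increase for small $p$. I would therefore state the $d_\gamma$ claim under the added hypothesis $c\ge n/D$, i.e.\ average kernel value at least $1/D$. This hypothesis holds, for instance, when the states are close to $I/D$ or strongly overlapping.

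I would also correct the limit. As $p\to1$ we have $\mu_1\to n/D$ and $\mu_k\to0$ for $k\ge2$, so
\[
d_\gamma(K_p)\to \frac{n/D}{n/D+\gamma},
\]
which equals $1$ only in the limit $\gamma\to0$. The paper's "tending to $1$" should be read as rank collapse rather than $d_\gamma\to1$.

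For $\deff$, I would reuse the top-eigenvalue-share argument of Prop.~\ref{prop:noise}. Dividing numerator and denominator by $s^2$ gives
\[
\deff=\frac{(t+A)^2}{t^2+B},\qquad t=\frac{\mu_1}{s}=c+\frac{a\,n}{s}.
\]
Here $t$ is strictly increasing in $p$ and tends to $\infty$, so $\deff\to1$. Differentiating in $t$ gives a derivative proportional to $(t+A)(B-tA)$. Hence $\deff$ is decreasing exactly when $t\ge B/A$, and the remaining step is to verify $c\ge B/A=\sum_{k\ge2}\lambda_k^2/\sum_{k\ge2}\lambda_k$. This holds when $\mathbf1$ carries the top eigenvalue ($c=\lambda_1\ge\lambda_k$), which is Perron--Frobenius applied to the entrywise-nonnegative $K$ together with constant row sums: the Perron vector is then $\mathbf1$. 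Otherwise $\deff$ may rise slightly before falling.

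The main obstacle is therefore $d_\gamma$ in the regime $c<n/D$, where I would not expect a proof without the extra hypothesis. The final clause of the lemma, injectivity of $\deff(K_p)$ without constant row sums, is empirical. It would be supported by evaluating the explicit ratio \eqref{eq:deffp} along the experimental $p$-grid rather than by proof.
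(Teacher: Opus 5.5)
\paragraph{Verdict.} Your $\deff$ argument is sound and is tighter than the paper's sketch. The paper goes straight from ``the top-eigenvalue share increases'' to ``$\deff$ decreases,'' which silently needs your threshold $t\ge B/A$. You close that step correctly: a nonnegative PSD matrix with constant row sums has $c=\lambda_{\max}\ge B/A$. Your correction of the limit is also right. Since $K_p\to\tfrac1D\mathbf 1\mathbf 1^\top$, we get $d_\gamma(K_p)\to\tfrac{n/D}{n/D+\gamma}<1$, so ``tending to $1$'' holds only for $\deff$.

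\paragraph{The gap.} The problem is in the $d_\gamma$ part. You claim the regime $c<n/D$ is ``genuinely possible'' and that $d_\gamma$ may increase for small $p$, and you therefore weaken the lemma with an extra hypothesis. That regime is empty for Hilbert--Schmidt kernels of density matrices. The bound $c\ge K_{ii}\ge 1/D$ that you used is simply not sharp. Apply Cauchy--Schwarz to the eigenvalues of the $D\times D$ PSD matrix $\sum_i\rho(x_i)$, whose trace is $n$:
\[
\mathbf 1^\top K\mathbf 1=\operatorname{Tr}\big[(\textstyle\sum_i\rho(x_i))^2\big]\ge\tfrac1D\big(\operatorname{Tr}\textstyle\sum_i\rho(x_i)\big)^2=\tfrac{n^2}{D}.
\]
Under $K\mathbf 1=c\mathbf 1$ the left side equals $cn$, so $c\ge n/D$ always. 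Your own derivative $d\mu_1/dp=2(1-p)(n/D-c)\le0$ then finishes the proof with no added hypothesis. This step is also missing from the paper's sketch, which tracks only the tail eigenvalues.

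\paragraph{A cleaner route.} Write $\rho=I/D+\tilde\rho$ with $\tilde\rho$ traceless. Then $K=\tfrac1D\mathbf 1\mathbf 1^\top+\tilde K$, where $\tilde K\succeq0$ is the Gram matrix of the traceless parts. Because $(1-p)^2+p(2-p)=1$, the depolarizing family is exactly
\[
K_p=\tfrac1D\mathbf 1\mathbf 1^\top+(1-p)^2\tilde K.
\]
This family is Loewner-decreasing in $p$. By Weyl monotonicity, every eigenvalue of $K_p$ is non-increasing, so $d_\gamma(K_p)$ is non-increasing for every $\gamma>0$ even without the constant-row-sum condition.
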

\begin{proof}[Proof sketch]
By Prop.~\ref{prop:noise} the non-top eigenvalues are $\mu_{k\ge2}=(1-p)^2\lambda_k$,
strictly decreasing in $p$, while the top eigenvalue absorbs the rank-one term.
$d_\gamma=\sum_i \mu_i/(\mu_i+\gamma)$ is increasing in each $\mu_i$, so the
shrinking tail lowers $d_\gamma$; $\deff$ decreases by the top-share argument of
Prop.~\ref{prop:noise}. Both limit to the rank-one value $1$. (Full proof in
App.~\ref{app:proofs}.)
\end{proof}

\begin{proposition}[Capacity bound]
\label{prop:cap}
For kernel ridge regression with regularization $\gamma$ on $n$ samples, the
expected generalization gap is $\tilde O\!\big(\sqrt{d_\gamma(K)/n}\big)$, and
$d_\gamma(K)$ is non-decreasing in every eigenvalue. Hence a spectral
contraction that lowers the tail eigenvalues (e.g.\ $K\mapsto K_p$ of
Prop.~\ref{prop:noise}, which by Lemma~\ref{lem:bridge} also lowers $\deff$)
tightens the bound. In an overfitting regime---empirical risk near zero while the
gap dominates---contracting the spectrum reduces expected risk, up to the point
where signal-carrying eigen-directions are attenuated, giving an interior
optimum; in an underfitting regime the bias term dominates and the sign reverses.
\end{proposition}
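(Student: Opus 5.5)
The plan is to split Prop.~\ref{prop:cap} into three parts. The first is the generalization-gap bound in terms of $d_\gamma$. The second is the monotonicity of $d_\gamma$ in the eigenvalues, combined with Lemma~\ref{lem:bridge}. The third is the qualitative bias--variance conclusion. The third part is not a theorem in the strict sense; I would state it as a corollary of an explicit risk decomposition rather than prove it in full generality.

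\emph{Step 1 (gap bound).} I would use the standard fixed-design analysis of kernel ridge regression. Write $\hat f=K(K+\gamma I)^{-1}y$ with $y=f^\star+\varepsilon$, where the noise is $\varepsilon$ with variance $\sigma^2$. The variance part of the excess risk is then
\[
\frac{\sigma^2}{n}\operatorname{tr}\!\big[K^2(K+\gamma I)^{-2}\big]\le\frac{\sigma^2}{n}\,d_\gamma(K).
\]
The optimism, i.e.\ the gap between expected test and training error, equals
\[
\frac{2\sigma^2}{n}\operatorname{tr}\!\big[K(K+\gamma I)^{-1}\big]=\frac{2\sigma^2 d_\gamma}{n}.
\]
To obtain the random-design $\tilde O(\sqrt{d_\gamma/n})$ rate, I would invoke a Rademacher-complexity bound for the RKHS ball, localized by the empirical spectrum. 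In that bound the local complexity satisfies $\lesssim\sqrt{\sum_i\min(\lambda_i,r)/n}$, and choosing $r\asymp\gamma$ gives $\sqrt{d_\gamma/n}$ up to constants. I would treat this as an imported classical result, with logarithmic factors and constants absorbed into $\tilde O$, rather than reprove it.

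\emph{Step 2 (monotonicity and contraction).} Each summand $\lambda/(\lambda+\gamma)$ has derivative $\gamma/(\lambda+\gamma)^2>0$, so $d_\gamma$ is non-decreasing in every eigenvalue. For $K\mapsto K_p$, Lemma~\ref{lem:bridge} already gives that $d_\gamma(K_p)$ and $\deff(K_p)$ are non-increasing under the constant-row-sum condition. In general, I would still control the tail using Weyl's inequality. Since $K_p=(1-p)^2K+\text{rank one}$, interlacing gives $\mu_{k+1}(K_p)\le(1-p)^2\lambda_k(K)$. The top eigenvalue contributes at most $1$ to $d_\gamma$, so the bound tightens up to an additive $O(1)$.

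\emph{Step 3 (regimes).} I would write the risk as $\text{bias}^2(\gamma,\text{spectrum})+\text{variance}$. Here the bias is $\gamma^2\|(K+\gamma I)^{-1}f^\star\|^2/n$, which increases as the eigenvalues carrying $f^\star$ shrink. The variance is $\propto d_\gamma/n$. Overfitting corresponds to the variance dominating at $p=0$, so the derivative of the risk along the contraction is negative there. As $p\to1$, the signal directions are attenuated by the factor $(1-p)^2$ and the bias blows up. Continuity then yields an interior minimizer. In the underfitting regime the bias derivative dominates from the start, which reverses the sign.

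I expect the main obstacle to be making Step 3 rigorous for a classifier such as an SVM rather than for ridge regression. There is also the problem that the rank-one term $\mathbf1\mathbf1^\top$ can interact with the signal $f^\star$, since the shift $S$ couples them. I would first restrict to the commuting (constant-row-sum) case and to the squared loss, and state the general case as a heuristic.
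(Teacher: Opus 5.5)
Your argument holds at the paper's own level of rigor, and Steps~1--2 follow the paper's proof closely. The paper also imports the local Rademacher bound $\sqrt{\frac1n\sum_i\min(\lambda_i,\gamma)}$ and converts it via $\min(\lambda,\gamma)\le 2\gamma\lambda/(\lambda+\gamma)$, i.e.\ $\sum_i\min(\lambda_i,\gamma)\le 2\gamma\, d_\gamma(K)$. That inequality is what sits behind your ``$r\asymp\gamma$ gives $\sqrt{d_\gamma/n}$ up to constants'', and you should state it; the leftover factor $\gamma$ is absorbed because $\gamma$ is fixed. The paper then uses $\partial_\lambda[\lambda/(\lambda+\gamma)]>0$, as you do.

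You differ from the paper in two places. First, your fixed-design optimism identity $2\sigma^2 d_\gamma/n$ is exact and gives an even faster $d_\gamma/n$ rate in that setting; the paper does not use it. Second, and more substantively, the paper justifies the contraction step by asserting that $K\mapsto K_p$ multiplies the non-top eigenvalues by $(1-p)^2$. That holds when $\mathbf1$ is an eigenvector of $K$, but not in general. Your interlacing bound $\mu_{k+1}(K_p)\le(1-p)^2\lambda_k(K)$ gives $d_\gamma(K_p)\le 1+d_\gamma\big((1-p)^2K\big)$ for every $K$. So ``tightens the bound'' holds without the row-sum hypothesis, at the cost of an additive $1$.

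You can remove that $1$. With $\tilde\rho=\rho-I/D$ one has $\operatorname{Tr}[\rho_i\rho_j]=\frac1D+\operatorname{Tr}[\tilde\rho_i\tilde\rho_j]$ and $\mathcal N_p[\rho]=I/D+(1-p)\tilde\rho$. Hence $K_p=\frac1D\mathbf1\mathbf1^\top+(1-p)^2\tilde K$, where $\tilde K\succeq0$ is a Gram matrix. So $K_p$ is non-increasing in $p$ in the Loewner order, and every eigenvalue, hence $d_\gamma(K_p)$, is non-increasing in $p$ with no extra assumption.

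Your Step~3 is no more heuristic than the paper's, but two sentences overreach. Variance dominating the risk in \emph{value} at $p=0$ does not make the risk's \emph{derivative} negative there. And the bias cannot blow up as $p\to1$: since $\|\gamma(K_p+\gamma I)^{-1}\|_{\mathrm{op}}\le1$, it is capped at $\|f^\star\|^2/n$. An interior optimum therefore needs a hypothesis at the $p\to1$ end as well.

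In your commuting case both points are fixed by the per-direction derivative $\partial_\mu\frac{\gamma^2\beta^2+\sigma^2\mu^2}{(\mu+\gamma)^2}=\frac{2\gamma(\sigma^2\mu-\gamma\beta^2)}{(\mu+\gamma)^3}$. It turns ``overfitting'' (risk decreasing in $p$ at $p=0$) and ``signal retained'' (risk increasing in $p$ near $p=1$) into checkable conditions, and continuity then gives the interior minimizer.
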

\begin{proof}[Proof sketch]
The local Rademacher complexity of the KRR hypothesis class is controlled by
$\sum_i\min(\lambda_i,\gamma)\le\gamma\,d_\gamma$, with
$d_\gamma=\sum_i\lambda_i/(\lambda_i+\gamma)$ \citep{abbas2021power}; each summand
is increasing in $\lambda_i$, so shrinking eigenvalues lowers $d_\gamma$ and the
$\tilde O(\sqrt{d_\gamma/n})$ gap. When training risk is $\approx0$ the risk is
gap-dominated and contraction helps; when bias dominates (underfitting) it hurts.
Full proof in App.~\ref{app:proofs}.
\end{proof}

By Lemma~\ref{lem:bridge} the participation ratio $\deff$ \eqref{eq:deff} and the
ridge dimension $d_\gamma$ move together along the noise family, so we report the
$\gamma$-free, label-free $\deff$ as the spectral summary throughout.

\subsection{When accuracy is a function of $\deff$ alone (P3)}
Capacity is not the whole story: the realized risk also depends on how the kernel
eigenbasis aligns with the labels.

\begin{proposition}[Risk decomposition and the one-parameter collapse]
\label{prop:align}
The kernel-machine excess risk decomposes (to leading order) into a
\emph{capacity} term, a functional of the full spectrum $\{\lambda_i\}$, and an
\emph{alignment} term depending on the target's projection onto the kernel
eigenbasis (summarized by $\mathrm{A}(K)$). Two consequences follow.
\textbf{(i) Exact collapse along a one-parameter spectral family.} If a family of
kernels is generated by a single scalar---as for the noise family $K_p$
(Prop.~\ref{prop:noise}), whose entire spectrum is fixed by $p$---then both terms,
and hence the risk, are functions of that scalar, equivalently of $\deff(p)$:
accuracy collapses \emph{exactly} onto a curve $\mathrm{acc}=f(\deff)$.
\textbf{(ii) Cross-family collapse is conditional.} For kernels of differing
spectral \emph{shape} (e.g.\ different ansatze) equal $\deff$ does not by itself
imply equal risk; collapse onto a common curve additionally requires comparable
spectral shape and comparable alignment $\mathrm{A}(K)$. The unentangled product
map violates the latter---its factorized kernel has low alignment---and therefore
sits off the curve.
\end{proposition}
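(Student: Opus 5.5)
To prove Prop.~\ref{prop:align}, I will first make ``to leading order'' concrete by working with kernel ridge regression on the centered one-hot targets $Y$. Take the eigendecomposition $K=\sum_i\lambda_i u_iu_i^\top$ and expand the target in this eigenbasis, $Y=\sum_i u_i c_i^\top$ with $c_i=Y^\top u_i$. The in-sample KRR predictor at ridge $\gamma$ is then $\hat Y=\sum_i \frac{\lambda_i}{\lambda_i+\gamma}u_ic_i^\top$. The standard bias--variance split (fixed design, noise level $\sigma^2$) gives
\begin{equation}
\mathcal R(K)\approx \underbrace{\sum_i\Big(\frac{\gamma}{\lambda_i+\gamma}\Big)^2\|c_i\|^2}_{\text{alignment/bias}}
+\underbrace{\frac{\sigma^2}{n}\sum_i\Big(\frac{\lambda_i}{\lambda_i+\gamma}\Big)^2}_{\text{capacity/variance}} .
\end{equation}
The second term depends only on the spectrum; it is bounded above by $d_\gamma/n$, which ties it to Prop.~\ref{prop:cap}. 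The first term depends on the projections $\|c_i\|^2$ of the target onto the eigenbasis. Its leading-order summary is $\mathrm A(K)=\sum_i\lambda_i\|c_i\|^2/(\|K\|_F\|T\|_F)$, which is a $\lambda$-weighted average of exactly these projections. This establishes the decomposition claim.

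For \textbf{(i)}, I will argue that along the family $K_p$ each ingredient is a deterministic function of $p$, using \eqref{eq:Kp}. Since $K_p=sK+\beta\mathbf1\mathbf1^\top$ with $s=(1-p)^2$ and $\beta=p(2-p)/D$, fixing the base $K$ and the labels makes the eigenpairs of $K_p$, the coefficients $c_i(p)$, and hence both terms of $\mathcal R$ functions of $p$ alone. The same holds for the accuracy of any deterministic kernel machine trained on $K_p$. To pass to a function of $\deff$, I need $p\mapsto\deff(K_p)$ to be injective on the range considered; then $\mathrm{acc}=g(p)=g(\deff^{-1}(\deff))=:f(\deff)$. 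Injectivity comes from the strict monotonicity in Prop.~\ref{prop:noise} and Lemma~\ref{lem:bridge} under constant row sums. Otherwise it rests on the empirically verified strict decrease, and I will state the result conditionally on that hypothesis.

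For \textbf{(ii)}, I will give a two-kernel counterexample showing that equal $\deff$ does not imply equal risk. Take two spectra with equal $(\sum\lambda)^2/\sum\lambda^2$ but different shapes, for example $(2,1,1,0)$ versus $(a,b,b,b)$ with $a,b$ solved to match the participation ratio. The variance terms $\sum(\lambda_i/(\lambda_i+\gamma))^2$ then differ for generic $\gamma$. Alternatively, keep the spectrum fixed and rotate the eigenbasis so that $c_i$ concentrates on the small-$\lambda_i$ directions; $\mathrm A$ drops and the bias term grows. Collapse onto a common curve therefore needs both the spectral shape (beyond $\deff$) and $\mathrm A$ to be matched. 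For the product-map remark, I would only argue that a factorized kernel $\prod_q k_q$ built from single-qubit encodings of separate pixels has eigenvectors given by tensor-like products of low-order features. These are poorly aligned with class labels that depend on nonlocal pixel combinations. I will treat the low-$\mathrm A$ value itself as measured rather than proven.

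The main obstacle is the injectivity step in (i) without constant row sums. From \eqref{eq:deffp}, $\deff(K_p)$ is a rational function of $p$ whose derivative sign involves the cross term $S$. My first attempt will be to show that the numerator of $\tfrac{d}{dp}\deff$ is a polynomial in $u=\beta/s$ of low degree whose sign is fixed when $S\ge T^2/n$; this holds because $\mathbf1^\top K\mathbf1\ge0$ and the entries of a fidelity Gram matrix are nonnegative. If that fails, I will retreat to piecewise injectivity, which still yields a function of $\deff$ on each monotone branch.
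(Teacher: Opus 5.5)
Your main line follows the paper's proof: the same bias--variance split with variance depending only on $\{\lambda_i\}$ and bias on the target's eigen-projections (your identity $\langle K,T\rangle_F=\sum_i\lambda_i\|c_i\|^2$ is correct), and claim (i) as ``everything is a function of $p$'' plus an injectivity hypothesis. Claim (ii) rests on the observation that one scalar leaves spectral shape and alignment free, and the product-map deficit is taken from the measured $\mathrm A(K)$. Your explicit pair $(2,1,1,0)$ versus $(a,1,1,1)$, $a=(9+4\sqrt6)/5$, is a correct and sharper version of the paper's (ii): both have $\deff=8/3$, but their variance sums tend to $3$ and $4$ as $\gamma\to0$.

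\textbf{The step that would fail is your attempt at unconditional injectivity.} Use $T=\operatorname{tr}K$, $Q$, $S$ as in \eqref{eq:deffp}, and put $u=\beta/s=(1-s)/(Ds)$, which increases strictly from $0$ to $\infty$ on $p\in[0,1)$. Then $\deff(K_p)=g(u)=(T+nu)^2/(Q+2Su+n^2u^2)$, and
\[
g'(u)=\frac{2(T+nu)\big[(nQ-TS)+nu\,(S-nT)\big]}{(Q+2Su+n^2u^2)^2}.
\]
Since $S\le n\lambda_1\le nT$ always, $\deff(K_p)$ is non-increasing on all of $[0,1]$ if and only if $nQ\le TS$, i.e.\ $S\ge nQ/T$. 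Because $nQ/T\ge T\ge T^2/n$, this is stronger than your threshold $S\ge T^2/n$, and nonnegativity of the entries does not supply it. For a counterexample, take a pure state and the maximally mixed state on its orthogonal complement. This gives $K=\mathrm{diag}(1,\varepsilon)$ with $\varepsilon=1/(D-1)<1$, which satisfies $S\ge T^2/n$. Yet $nQ-TS=(1-\varepsilon)^2>0$, so $\deff(K_p)$ initially \emph{increases}.

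The repair uses the correct threshold together with the upper bound $K_{ij}\le1$. For pure-state kernels, which are exactly the statevector kernels the paper's depolarizing family is built from, $T=n$ and $K_{ij}^2\le K_{ij}$ give $Q\le S$, hence $nQ\le TS$. Also $S<nT$ unless $K=\mathbf1\mathbf1^\top$. So $\deff(K_p)$ is strictly decreasing, which proves the injectivity the paper only verifies empirically. For mixed-state kernels the bracket is linear in $u$, so there is at most one turning point. Your fallback of piecewise injectivity would then give two branches rather than a single curve $\mathrm{acc}=f(\deff)$, so the exact-collapse claim genuinely needs the hypothesis there.

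Finally, drop the product-kernel heuristic. The Gram matrix of $\prod_q k_q$ is a Hadamard product of per-qubit Gram matrices, whose eigenvectors are not products of factor eigenvectors. The feature map $\bigotimes_q\phi_q(x_q)$ does contain cross-coordinate products, and the inputs are PCA components, not pixels. Resting on the measured $\mathrm A(K)$, as both you and the paper ultimately do, is the sound route.
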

\begin{proof}[Proof sketch]
Write the KRR risk as bias$^2$+variance; the variance is the capacity term (a
functional of the spectrum), the bias depends on the eigenbasis--target overlap,
i.e.\ on $\mathrm{A}(K)$. Along $K_p$ the spectrum is determined by $p$, so risk
is a function of $p$ and hence of $\deff(p)$ (claim i). For different shapes,
$\deff$ is only a scalar summary, so equality of $\deff$ leaves the remaining
spectral and alignment degrees of freedom free (claim ii). Full proof in
App.~\ref{app:proofs}. \end{proof}

We emphasize the honest reading: $\deff$ is the governing variable \emph{within a
one-parameter spectral family} (rigorously) and an excellent predictor
\emph{across} entangled ansatze (empirically, once alignment is comparable---which
we measure in Sec.~\ref{sec:unify}). It is not a universal sufficient statistic
for generalization, and alignment remains a genuine second factor.

This predicts exactly what we observe (Section~\ref{sec:unify}): entangled
ansatze share comparable, high alignment and collapse onto one $\deff$ curve,
whereas an unentangled (product) feature map---whose kernel factorizes,
$k(x,x')=\prod_q\operatorname{Tr}[\rho_q(x)\rho_q(x')]$, and cannot represent
cross-qubit correlations---has lower alignment and sits off the curve at every
$\deff$. Entanglement is thus a precondition that fixes alignment, after which
$\deff$ governs generalization.

\section{Experiments}
\label{sec:exp}

\paragraph{Feature map.} The quantum feature map is an $n_q$-qubit
data-re-uploading circuit. We set $n_q{=}6$ for the main experiments, $n_q{=}8$
for the depth/width study, and scale to $n_q{=}12$ for the exact-depolarizing
analysis (Sec.~\ref{sec:exact}). Each input $x$ (the leading principal components,
standardized and scaled to $[-2.7,2.7]$) is encoded over $L$ layers ($L{=}2$,
varied to $L\in\{1,\dots,4\}$ in the depth study); layer $\ell$ applies
$R_Y(x_i+\theta_{\ell,i})\,R_Z(0.7\,x_i)$ on each qubit $i$ followed by an
entangling block of CNOTs, with fixed random $\theta_{\ell,i}$ (the kernel is
data-driven, not trained, isolating the spectral mechanism). The \emph{entangling
block} ranges over four profiles: \textsc{product} (none), \textsc{chain}
($i\!\to\!i{+}1$), \textsc{ring} ($i\!\to\!i{+}1\!\mod n_q$), and
\textsc{all-to-all} (App.~\ref{app:background} gives definitions of all gates,
noise channels, and circuit diagrams; Fig.~\ref{fig:topo},~\ref{fig:circuit}). Noise is injected as single-qubit amplitude damping of rate
$p$ after each entangling block, simulated exactly on a density-matrix backend;
the global-depolarizing family is instead computed analytically from the
noiseless statevector kernel via \eqref{eq:Kp}, which is what enables the
$12$-qubit runs.

\paragraph{Kernel, classifier, and spectral quantities.} We form the HS kernel
\eqref{eq:kernel} from the simulated $\rho(x)$ via
$K_{ij}=\mathrm{Re}\,\langle\mathrm{vec}\,\rho(x_i),\mathrm{vec}\,\rho(x_j)\rangle$
(a single BLAS product over flattened density matrices) and classify with a
precomputed-kernel SVM ($C{=}10$). Spectral descriptors $\deff$
\eqref{eq:deff}, spectral entropy $H_{\mathrm{spec}}=-\sum_i p_i\log p_i$ with
$p_i=\lambda_i/\sum_j\lambda_j$, and kernel--target alignment are computed on the
training block only (label-free except for alignment, which uses clean training
labels).

\paragraph{Data and overfitting regime.} Inputs are \textsc{Digits} (10-way) and,
for robustness, Fashion-MNIST and the medical BloodMNIST, each reduced to
$6$ (or $8$) PCA features (Fig.~\ref{fig:datasets}). To create a controlled
overfitting regime in which regularization can help---and in which the sign of the
$\deff$/accuracy relation is unambiguous---we corrupt a fraction $\ell$ of
\emph{training} labels uniformly at random while keeping the test set clean.
Unless noted, $\ell{=}25\%$, $n_{\mathrm{train}}{=}150$,
$n_{\mathrm{test}}{=}250$.

\subsection{Entanglement is a precondition (P3)}
\label{sec:ent}
We first isolate the role of entanglement, varying only the entangling topology
with no injected noise (Table~\ref{tab:ent}, mean$\pm$std over $5$ seeds that
re-draw the PCA fit, splits, label corruption, and circuit parameters). The
salient effect is a large gap between the \emph{unentangled} product map
($0.471\pm0.060$) and any entangled map ($0.61$--$0.62$). \emph{Among} the
entangled topologies, however, neither $\deff$ nor accuracy is cleanly ordered
(chain/ring/all-to-all have similar $\deff\!\approx\!42$--$45$ and accuracy within
noise), and the four-point rank correlation $\rho(\deff,\text{test})=-0.48\pm0.29$
is weak and driven almost entirely by the product outlier. We therefore do
\emph{not} read this as ``$\deff$ orders entanglement.'' Consistently with
Prop.~\ref{prop:align}, the product deficit is an \emph{alignment} effect---the
factorized product kernel cannot represent cross-feature correlations---which we
confirm directly in Sec.~\ref{sec:unify} by measuring kernel--target alignment.
Entanglement's role is thus to establish the feature-space alignment that places
a model in the regime where the $\deff$ law (demonstrated next, via the noise
sweep) holds.

\begin{table}[t]
\centering
\caption{Entanglement as a precondition: noiseless circuits, mean$\pm$std over $5$
seeds. The unentangled product map is markedly worse; among entangled topologies
$\deff$ and accuracy are not separately ordered (the four-point
$\rho(\deff,\text{test})=-0.48\pm0.29$ is weak and product-driven). The
$\deff$-governs-generalization claim is carried by the noise sweep and collapse
(Secs.~\ref{sec:noise}--\ref{sec:unify}), not by this table.}
\label{tab:ent}
\begin{tabular}{lcc}
\toprule
Profile & $\deff$ & Test acc \\
\midrule
Product    & $54.2\pm6.0$ & $0.471\pm0.060$ \\
Chain      & $42.0\pm2.7$ & $0.616\pm0.025$ \\
Ring       & $43.9\pm3.3$ & $0.624\pm0.026$ \\
All-to-all & $45.0\pm4.7$ & $0.610\pm0.048$ \\
\bottomrule
\end{tabular}
\end{table}

\subsection{Noise is a spectral regularizer and $\deff$ governs accuracy (P1, P2)}
\label{sec:noise}

\begin{figure}[t]
\centering
\includegraphics[width=0.98\textwidth]{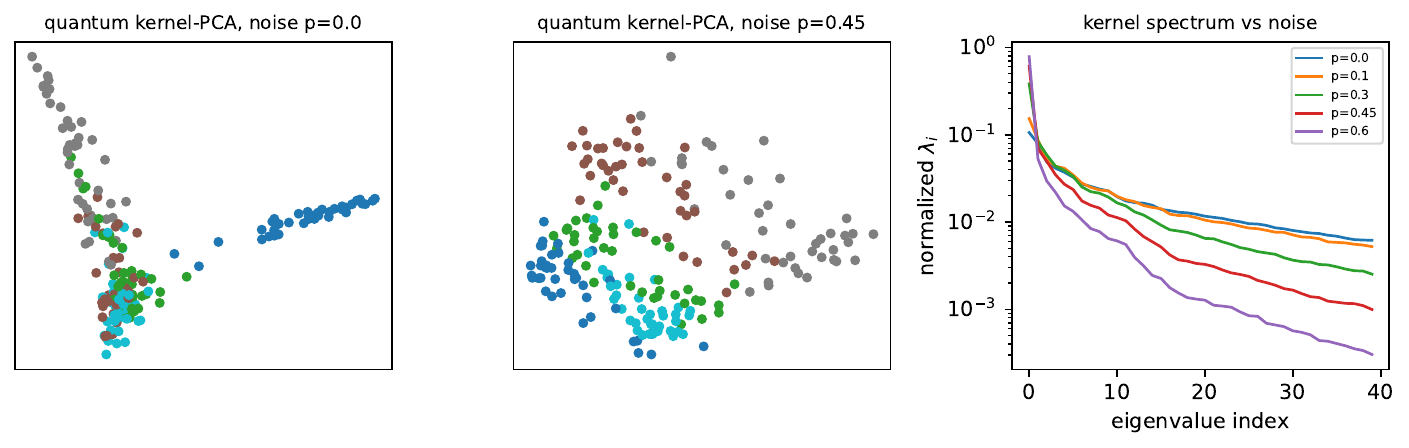}
\caption{Noise reshapes the quantum feature geometry by contracting the spectrum.
\textbf{Left, middle:} quantum kernel-PCA embedding of test points (colored by
class) at noise $p{=}0$ and $p{=}0.45$. \textbf{Right:} the kernel eigenvalue
spectrum decays increasingly fast with $p$---a direct visualization of the
ridge-like contraction $\deff(p)\!\downarrow$ proven in Prop.~\ref{prop:noise}.}
\label{fig:geom}
\end{figure}
We now vary $\deff$ over a wide range using the noise knob, which both exercises
the proven mechanism (Prop.~\ref{prop:noise}) and gives the clean
$\deff$-governs-accuracy evidence that the four-point entanglement table cannot.
Table~\ref{tab:noise} sweeps amplitude-damping strength on two entangled
ansatze. As predicted, increasing noise monotonically contracts $\deff$ (from
$\sim 44$ to $\sim 3$), monotonically reduces the train accuracy
(less memorization), and improves test accuracy by up to $+13.2\%$ (ring) and
$+12.4\%$ (all-to-all). This \emph{qualitatively} reproduces the
``noise-can-help'' phenomenon reported by \citet{wang2025hybrid} (their
$+2.71\%$ under amplitude damping) and---more importantly---exposes its
mechanism: noise-induced spectral
contraction acting as ridge regularization.

\begin{table}[t]
\centering
\caption{P2: amplitude-damping sweep (gain is test accuracy relative to the
noiseless circuit). Noise contracts $\deff$, lowers train accuracy, and raises
test accuracy in this overfitting regime.}
\label{tab:noise}
\begin{tabular}{lccccccc}
\toprule
& \multicolumn{3}{c}{Ring} & & \multicolumn{3}{c}{All-to-all} \\
\cmidrule(lr){2-4}\cmidrule(lr){6-8}
Noise $p$ & $\deff$ & Train & Test & & $\deff$ & Train & Test \\
\midrule
0.00 & 44.20 & 0.987 & 0.624 & & 43.63 & 0.993 & 0.648 \\
0.03 & 40.57 & 0.953 & 0.652 & & 38.81 & 0.973 & 0.660 \\
0.06 & 35.58 & 0.933 & 0.704 & & 33.52 & 0.953 & 0.716 \\
0.10 & 27.70 & 0.907 & 0.728 & & 26.38 & 0.927 & 0.708 \\
0.15 & 18.37 & 0.860 & 0.752 & & 18.50 & 0.920 & 0.744 \\
0.20 & 11.68 & 0.827 & 0.752 & & 12.60 & 0.867 & 0.756 \\
0.30 &  5.23 & 0.793 & 0.748 & &  6.13 & 0.813 & 0.772 \\
0.45 &  2.53 & 0.727 & \textbf{0.756} & &  2.91 & 0.753 & \textbf{0.772} \\
\midrule
\multicolumn{4}{l}{Best gain vs.\ noiseless: $+0.132$} &
\multicolumn{4}{r}{$+0.124$}\\
\bottomrule
\end{tabular}
\\[2pt]
{\footnotesize Over $5$ seeds the best noise gain is
$+0.131\pm0.034$ (ring) and $+0.135\pm0.023$ (all-to-all): the benefit is robust,
not a seed artifact.}
\end{table}

\subsection{An inverted-U sweet spot whose benefit scales with overfitting}
\label{sec:sweetspot}
We next sweep injected noise over a wide range while varying the
\emph{overfitting severity}, controlled by the fraction $\ell$ of corrupted
training labels (the test set is always clean and the test task is fixed).
Table~\ref{tab:sweet} shows two clean effects. First, every row exhibits the
predicted \emph{inverted-U}: test accuracy rises with noise, peaks, then
declines as spectral contraction destroys signal-bearing directions---the
mechanistic origin of the ``double-edged'' behavior reported by
\citet{wang2025hybrid}. Second, the \emph{benefit} of optimal noise grows
monotonically with overfitting severity, from $+2.4\%$ at $\ell{=}0$ to
$+30.0\%$ at $\ell{=}40\%$: the more the noiseless model memorizes, the more
spectral regularization helps. Notably, the optimal operating point stays near
$p^\star\!\approx\!0.45$ ($\deff\!\approx\!3$) across $\ell$, exactly as the
matching principle predicts: $\ell$ changes how much one overfits, but not the
intrinsic complexity $d^\star$ of the (fixed) test task, so the optimal $\deff$
is unchanged. We stress that the \emph{magnitude} of the gain is a function of
the injected overfitting and is therefore not directly comparable to the
$+2.71\%$ reported on a trained model under naturally mild noise; the point of
this experiment is the mechanism and the lawful dependence of the gain on
overfitting severity, not the headline number.

\begin{table}[t]
\centering
\caption{Test accuracy vs.\ amplitude-damping rate $p$ (columns) at four
overfitting levels $\ell$ (rows; \% corrupted training labels), all-to-all
ansatz. Each row is an inverted-U with peak in \textbf{bold}; the optimal $p$
stays $\approx 0.45$ while the \emph{gain} grows with $\ell$. $\deff$ contracts
from $38.2$ ($p{=}0$) to $1.2$ ($p{=}0.9$).}
\label{tab:sweet}
\small
\begin{tabular}{l|ccccccccc|c}
\toprule
$\ell \backslash p$ & 0.00 & 0.05 & 0.10 & 0.20 & 0.30 & 0.45 & 0.60 & 0.75 & 0.90 & Gain \\
\midrule
0\%  & .800 & .808 & .800 & .796 & .804 & \textbf{.824} & .824 & .816 & .808 & $+.024$\\
10\% & .696 & .712 & .724 & .756 & \textbf{.772} & .772 & .772 & .764 & .752 & $+.076$\\
25\% & .624 & .632 & .676 & .752 & .768 & \textbf{.784} & .752 & .752 & .720 & $+.160$\\
40\% & .404 & .440 & .500 & .592 & .672 & \textbf{.704} & .700 & .676 & .644 & $+.300$\\
\bottomrule
\end{tabular}
\end{table}

\subsection{P3: one quantity, two knobs --- accuracy collapses onto $\deff$}
\label{sec:unify}
The central claim of our theory is that $\deff$ is the variable governing
generalization, and that entanglement and noise matter only through it. We test
this directly. Fixing the overfitting regime ($25\%$ label noise), we build a
grid crossing four entanglement profiles with six noise rates ($24$
configurations) and ask whether test accuracy is a function of $\deff$ alone,
regardless of which knob produced a given $\deff$.

The answer is a clean \emph{conditional} collapse (Table~\ref{tab:codesign}).
Among the three \emph{entangled} ansatze (chain, ring, all-to-all), accuracy
collapses tightly onto a single curve of $\deff$: the rank correlation is
$-0.896$ and a quadratic fit $\mathrm{acc}=f(\log\deff)$ explains $R^2=0.92$ of the
variance for this seed. \textbf{Crucially, the collapse is not a seed artifact}:
over $5$ seeds (re-drawing PCA, splits, label corruption, and circuit parameters)
the entangled collapse gives $R^2=0.816\pm0.082$ and
$\rho_{\mathrm{Spearman}}=-0.778\pm0.159$. A curve fit to the noise sweep of
\emph{one} ansatz predicts the accuracy of the \emph{other two} from their
$\deff$ alone with $R^2=0.88$ and mean absolute residual $0.024$. Within the
entangled regime, then, accuracy is well predicted by $\deff$ irrespective of
whether $\deff$ was set by topology or by noise---consistent with the exact
one-parameter collapse along the noise family (Prop.~\ref{prop:align}(i)) and the
empirically comparable alignment of the entangled ansatze. \textbf{We measure
this alignment directly} (Prop.~\ref{prop:align}(ii)'s premise), with $5$-seed
error bars: the entangled maps have comparable, higher kernel--target alignment
($\mathrm{A}(K)$: chain $0.318\pm0.040$, ring $0.350\pm0.044$, all-to-all
$0.331\pm0.039$) than the product map ($0.275\pm0.042$). To show alignment is a
genuine \emph{axis}---not just one product outlier---we interpolate kernels
$K_t=(1{-}t)K_{\textsc{product}}+t\,K_{\textsc{all2all}}$ at roughly fixed $\deff$
and track their distance from the entangled collapse curve: as $\mathrm{A}(K)$
rises from $0.275$ to $0.33$, the residual moves monotonically from $-0.11$
(below the curve) to $\approx 0$ (on it), with
$\rho_{\mathrm{Spearman}}(\mathrm{A},\text{residual})=+0.66\pm0.27$. Alignment
thus traces a second axis orthogonal to $\deff$. We also verify the
spectral bridge of Lemma~\ref{lem:bridge}: along the noise sweep $\deff$ and the
ridge dimension $d_\gamma$ are perfectly rank-correlated
($\rho_{\mathrm{Spearman}}{=}1.0$, Pearson $0.94$), so reporting $\deff$ rather
than $d_\gamma$ loses no ordering information. The
unentangled product circuit is the expected exception: it lies $0.163$ (16
accuracy points) off the entangled curve, and no amount of noise-induced
contraction moves it on, because its factorized kernel has low label alignment
(Prop.~\ref{prop:align}(ii)). Entanglement is thus a precondition that supplies
alignment, within which $\deff$ governs---the precise sense in which ``more
entanglement helps'' and ``noise helps'' reduce to one spectral account, without
claiming $\deff$ is a universal sufficient statistic.

\begin{figure}[t]
\centering
\includegraphics[width=0.62\textwidth]{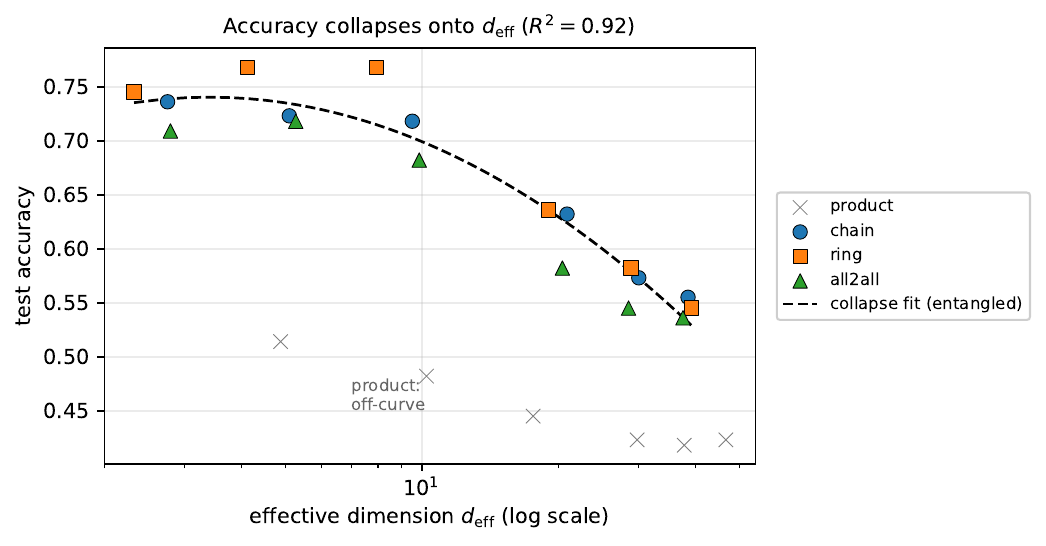}
\caption{Test accuracy as a function of $\deff$ across the $4\times6$ grid
(ansatz $\times$ noise rate, $25\%$ label noise). Among the three entangled
ansatze (chain/ring/all-to-all), points whose $\deff$ is set by topology and
those whose $\deff$ is set by noise collapse onto a single curve ($R^2{=}0.92$).
The unentangled product circuit (gray $\times$) lies well below the curve at
every $\deff$: entanglement is a precondition, after which $\deff$ governs.}
\label{fig:collapse}
\end{figure}

\begin{table}[t]
\centering
\caption{P3 (unification): within the entangled regime, test accuracy collapses
onto a single function of $\deff$, whether $\deff$ is moved by entanglement
topology or by noise (grid of $4$ ansatze $\times$ $6$ noise rates, $25\%$ label
noise). The unentangled product circuit is an outlier, confirming entanglement
is a precondition beyond mere spectral contraction.}
\label{tab:codesign}
\begin{tabular}{lc}
\toprule
Metric & Value \\
\midrule
Spearman$(\deff,\text{acc})$, entangled ($18$ configs) & $-0.896$ \\
Spearman$(\deff,\text{acc})$, all $24$ configs & $-0.718$ \\
Global $R^2$ (acc $\sim$ quad$(\log\deff)$), entangled & $0.921$ \\
Collapse $R^2$ (one ansatz's noise curve $\to$ other two) & $0.881$ \\
Mean $|$residual$|$, entangled configs from collapse curve & $0.024$ \\
Mean $|$residual$|$, product configs from collapse curve & $0.163$ (outlier) \\
\bottomrule
\end{tabular}
\end{table}

\subsection{A sanity check (not a headline): the depolarizing one-parameter null}
\label{sec:exact}
We include the global-depolarizing family as a controlled \emph{null}, and are
explicit that it is one. Because $K_p$ is an exact function of the single scalar
$p$ \eqref{eq:Kp}, both $\mathrm{acc}(p)$ and $\deff(K_p)$ are deterministic
functions of $p$; hence whenever $p\mapsto\deff$ is injective, $\mathrm{acc}$ is
\emph{by construction} a function of $\deff$ and the collapse is exact---this would
hold equally for any injective scalar of $K_p$ (e.g.\ $p$ itself, $(1-p)^2$,
$\operatorname{tr}K_p^2$, or the spectral entropy). The exact collapse therefore
does not, on its own, single out $\deff$; the informative result is the
\emph{cross-ansatz} collapse of Sec.~\ref{sec:unify} ($R^2{=}0.82\pm0.08$), where
genuinely different spectral shapes are compressed onto one curve.

What this family \emph{does} usefully verify is the analytic theory and its reach.
The decomposition $K_p=(1-p)^2K+\tfrac{p(2-p)}{D}\mathbf1\mathbf1^\top$ matches a
density-matrix simulation to machine precision (max entrywise error
$7.8\times10^{-16}$ at $n_q{=}4$), and because $K_p$ is analytic in the noiseless
\emph{statevector} kernel (no $2^{n_q}\times2^{n_q}$ density matrix needed) we can
evaluate it at $n_q\in\{8,10,12\}$ qubits; there $\deff(K_p)$ is strictly
decreasing (hence injective) and the by-construction collapse is, as expected,
$R^2{=}0.997$--$1.000$. We report this as confirmation of \eqref{eq:Kp} at scale
and of the injectivity that Prop.~\ref{prop:align}(i) needs---not as evidence for
the spectral thesis, which rests on the cross-ansatz collapse.

\subsection{Robustness: datasets, depth/width, and real-device noise}
\label{sec:robust}

\begin{figure}[t]
\centering
\includegraphics[width=0.92\textwidth]{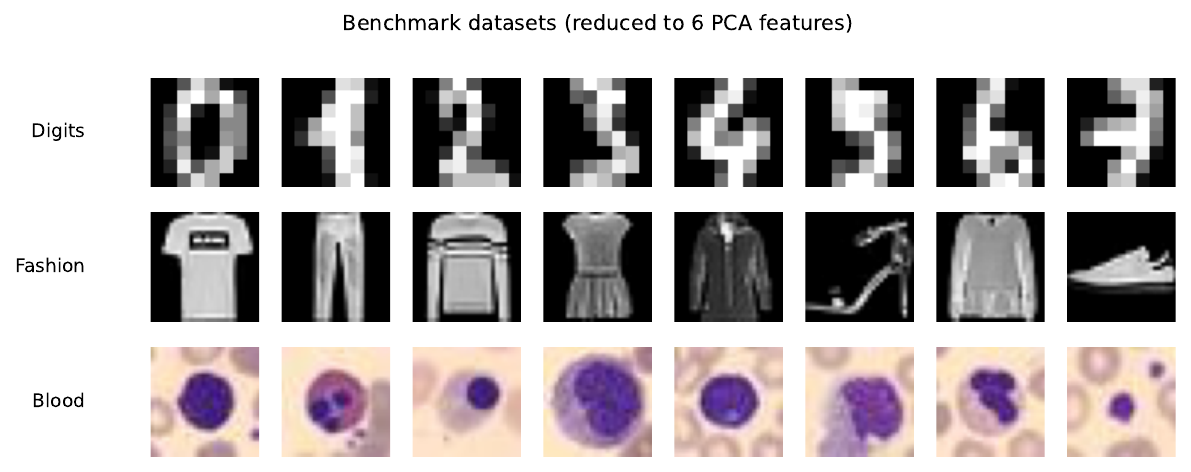}
\caption{Example images from the three vision benchmarks used (each reduced to
$6$ principal components before the quantum feature map): handwritten Digits,
Fashion-MNIST, and the medical BloodMNIST.}
\label{fig:datasets}
\end{figure}
\paragraph{Additional datasets.} We repeat the full collapse grid on
Fashion-MNIST and on BloodMNIST (MedMNIST), reducing each to $6$ PCA features
(Table~\ref{tab:datasets}). On Fashion-MNIST the picture is as strong as on
\textsc{Digits} (entangled $\rho=-0.88$, $R^2=0.88$, noise helps every ansatz).
The medical BloodMNIST is weaker in Table~\ref{tab:datasets} ($R^2=0.46$), but a
controlled follow-up shows this is largely a \emph{confound}, not a fundamentally
harder distribution: that table used a $4$-class subset (for parity with a small
budget), and the weakness is explained by class count and sample size, not the
medical images. Matching to its full $8$-class task raises the collapse to
$R^2=0.84$ ($\rho=-0.90$), and doubling the training set on the $4$-class task
raises it to $R^2=0.78$---both close to Digits/Fashion. (For reference,
reducing \textsc{Digits} to $4$ classes leaves $R^2=0.90$, so the effect is
data-dependent.) The spectral story thus survives on a medical benchmark once the
class-count and sample-size confounds are removed.

\paragraph{Depth and width.} Varying circuit depth $L\!\in\!\{1,2,3,4\}$ and
qubit count $n_q\!\in\!\{6,8\}$ (Digits, $25\%$ noise) preserves the effect at
every setting: the entangled $\rho_{\mathrm{Spearman}}(\deff,\text{acc})$ stays
strongly negative and the best noise gain is always positive. At the larger
$8$-qubit width, a full collapse grid gives $\rho_{\mathrm{Spearman}}=-0.82$,
$R^2=0.92$, and noise gain $+0.19$---the picture is, if anything, cleaner at
larger scale, not weaker.

\begin{table}[t]
\centering
\caption{Robustness across datasets (collapse grid, $6$ qubits, $25\%$ label
noise). The mechanism is strong on Digits/Fashion and weaker on the harder
medical task.}
\label{tab:datasets}
\begin{tabular}{lccc}
\toprule
Dataset & Spearman$(\deff,\text{acc})$ ent. & Collapse $R^2$ ent. & Product residual \\
\midrule
Digits          & $-0.94$ & $0.91$ & $0.074$ \\
Fashion-MNIST   & $-0.88$ & $0.88$ & $0.080$ \\
BloodMNIST (4-cls) & $-0.45$ & $0.46$ & $0.028$ \\
\bottomrule
\end{tabular}
\end{table}

\begin{figure}[t]
\centering
\includegraphics[width=0.98\textwidth]{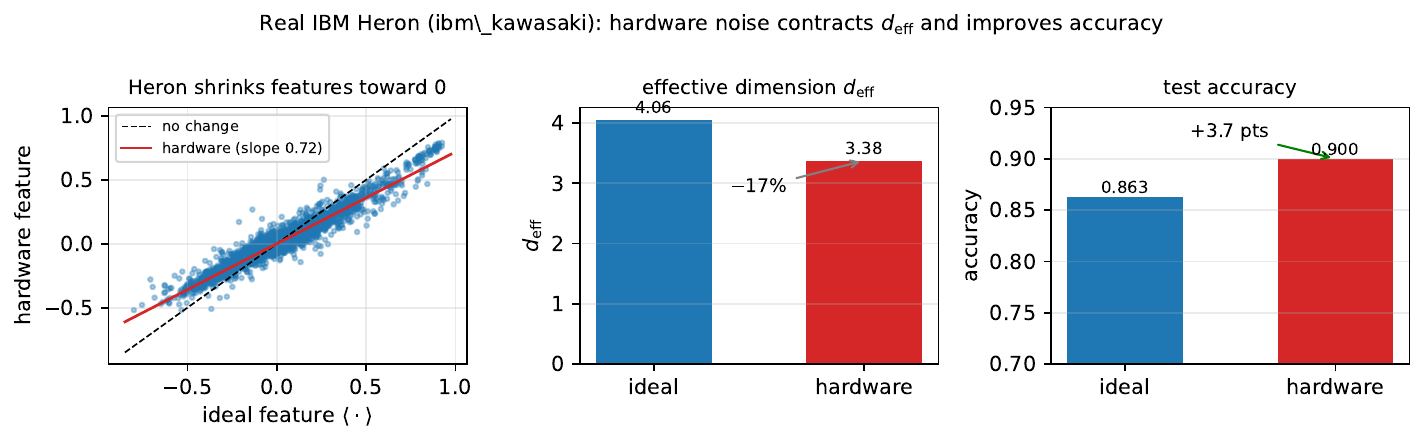}
\caption{Real IBM Heron (\texttt{ibm\_kawasaki}) in an overfitting regime. Left:
hardware-measured features shrink toward $0$ relative to ideal (slope $0.72$),
the device-noise contraction. Middle/right: this contracts $\deff$ ($4.06\!\to\!3.38$)
and \emph{improves} test accuracy ($0.863\!\to\!0.900$), realizing the
noise-as-regularization mechanism on silicon.}
\label{fig:hardware}
\end{figure}

\paragraph{The mechanism on real IBM Heron hardware---with a positive accuracy
effect.} We ran the feature circuit on a real Heron device (\texttt{ibm\_kawasaki}),
reading out $21$ features ($\langle Z_i\rangle$ and $\langle Z_iZ_j\rangle$) with
$4096$ shots, in a controlled \emph{overfitting} regime (binary task, $30\%$
training-label noise, depth $L{=}3$). There the intrinsic hardware noise
acts as the predicted spectral regularizer: it contracts the measured kernel from
$\deff{=}4.06$ to $\deff{=}3.38$ and \emph{improves} test accuracy from $0.863$
(noiseless ideal) to $0.900$ on hardware ($+0.037$)---the noise-as-regularization
mechanism of Prop.~\ref{prop:noise}, realized on silicon with a beneficial effect.
Two controls confirm the picture. First, the regime matters: in a
\emph{non-overfitting} deep run (10-way, $L{=}5$) the same contraction instead
lowered accuracy, the sign flip of Sec.~\ref{sec:signflip}, so the gain requires
an overfitting model. Second, depth matters: at shallow $L{=}2$ the device noise
is too mild (under the \texttt{FakeTorino} Heron r1 model $\deff$ moves only
$37.0\!\to\!36.1$). Thus present-day Heron noise can be harnessed as a useful
regularizer when the circuit is deep enough to contract the spectrum and the model
is in the overfitting regime.

\subsection{Falsification: the $\deff$--accuracy sign flips between regimes}
\label{sec:signflip}
Our theory predicts (Prop.~\ref{prop:cap}) that the benefit of contracting
$\deff$ is regime-dependent: helpful under overfitting, harmful under
underfitting. This is a falsifiable claim, and it holds (Table~\ref{tab:signflip}).
In the overfitting regime (expressive circuit, $25\%$ label noise, train
acc $=1.0$), increasing noise contracts $\deff$ and \emph{raises} test accuracy
($\rho_{\mathrm{Spearman}}(\deff,\text{test})=-0.79$, best $p=0.45$, gain
$+0.156$). In a genuine underfitting regime (low-capacity $3$-qubit product map,
clean labels, train acc $=0.85<1$), the \emph{same} contraction \emph{lowers}
test accuracy, and the correlation flips sign to $+0.91$ (best $p=0$, no benefit
from noise). $\deff$ is thus not a quantity to be minimized but to be matched to
the task; the sign of its effect is set by the bias--variance regime, exactly as
Prop.~\ref{prop:cap} states.

\begin{figure}[t]
\centering
\includegraphics[width=0.82\textwidth]{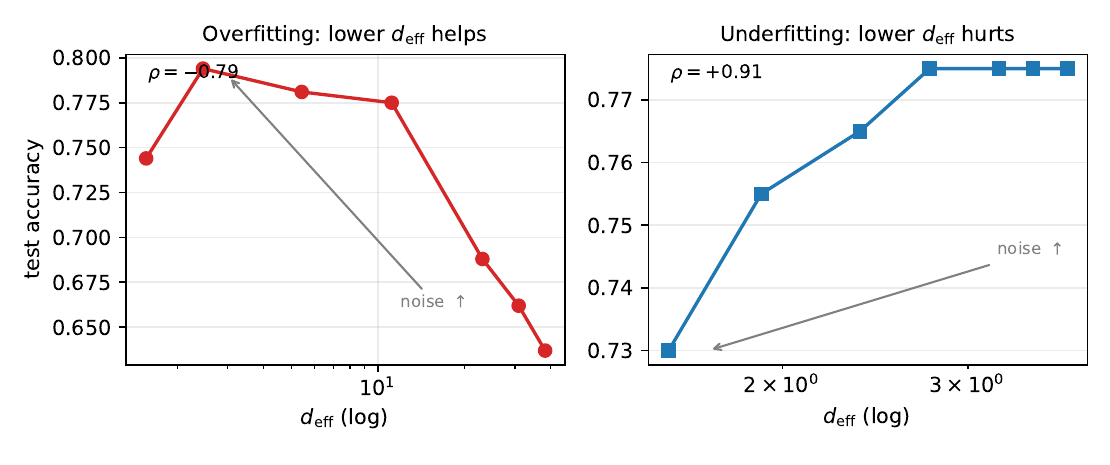}
\caption{The $\deff$--accuracy relation flips sign between regimes. \textbf{Left}
(overfitting; expressive circuit, $25\%$ noisy labels, train acc $1.0$):
contracting $\deff$ via noise raises test accuracy ($\rho{=}-0.79$). \textbf{Right}
(underfitting; $3$-qubit product map, clean labels, train acc $0.85$): the same
contraction lowers test accuracy ($\rho{=}+0.91$). Arrows mark increasing noise.}
\label{fig:signflip}
\end{figure}

\begin{table}[t]
\centering
\caption{Falsification of the predicted sign flip. Lower $\deff$ helps under
overfitting and hurts under underfitting.}
\label{tab:signflip}
\begin{tabular}{lcccc}
\toprule
Regime & Train acc & $\rho_{\mathrm{Spearman}}(\deff,\text{test})$ & best $p$ & noise gain \\
\midrule
Overfitting (expressive, $25\%$ noisy labels) & $1.00$ & $-0.79$ & $0.45$ & $+0.156$ \\
Underfitting ($3$-qubit product, clean) & $0.85$ & $+0.91$ & $0.00$ & $-0.045$ \\
\bottomrule
\end{tabular}
\end{table}

\subsection{Label-free selection of the noise level}
\label{sec:labelfree}
Because $\deff(p)$ is computable from training inputs alone, the operating point
can be chosen without test labels. Two simple rules work. (i) A small \emph{clean}
validation split selects, per task, a noise level whose test accuracy is within
$0.012$--$0.021$ of the oracle best while evaluating only $2$ of $9$ candidates
($4.5\times$ less search) on Digits, Fashion, and BloodMNIST. (ii) A fully
label-free target $\deff(p)\approx\sqrt{n}$ attains $0.698\pm0.026$ vs.\ the
oracle $0.754\pm0.038$ ($5.6\%$ gap, $5$ seeds). The flat landscape near the
optimum (Table~\ref{tab:sweet}) makes exact localization hard but the accuracy
cost of mis-selection small.

\subsection{$\deff$ is a principled, transferable diagnostic}
\label{sec:classical}
A strength of the spectral account is that it rests on a well-established
classical pillar---kernel generalization is governed by the spectrum
(Prop.~\ref{prop:cap})---so $\deff$ is a \emph{principled, transferable}
diagnostic rather than quantum folklore. We verify that it behaves faithfully on
the same data with classical feature maps, which licenses reading our quantum
measurements through this lens. On the identical PCA features ($25\%$ label
noise), classical regularizers trace out a $\deff$ range and accuracy tracks it:
an RBF-kernel bandwidth sweep gives $\rho_{\mathrm{Spearman}}(\deff,\text{test})=-0.50$,
and a random-Fourier-feature dimension sweep recovers the complementary
(capacity-limited) branch ($+0.68$).

\emph{Noise injection is genuine spectral regularization, not a weak
hyperparameter knob.} One might worry that injecting noise merely substitutes for
tuning the SVM regularizer $C$. It does not. On the quantum kernel, sweeping $C$
on the \emph{noiseless} kernel tops out at $0.725$ (it cannot change the kernel
spectrum); injecting noise and then optimizing $C$ reaches $0.744$, and an
\emph{explicit} spectral shrinkage of the noiseless kernel,
$K\mapsto U\,\mathrm{diag}(\lambda_i^{\alpha})\,U^\top$, traces out the same
$\deff$--accuracy frontier ($0.73$ at contracted $\deff$). Thus noise injection
lands on the spectral-filtering frontier and adds a consistent gain that
$C$-tuning alone cannot reach, confirming it acts on the kernel spectrum itself.

\emph{$\deff$ is not the uniquely best predictor---but it is the right handle.}
We compared $\deff$ against other scalar summaries on the cross-ansatz grid
($5$ seeds): the rank correlation with test accuracy is
$\deff\!: 0.78\pm0.16$, spectral entropy $0.77\pm0.16$, top-eigenvalue share
$0.78\pm0.15$, and \emph{train accuracy} $0.80\pm0.14$ (quad-fit $R^2$:
$0.82,0.77,0.82,0.87$). We report this honestly: $\deff$ ties the other spectral
concentration measures and is marginally edged out by train accuracy. Its value is
therefore not that it predicts best, but that it is the \emph{label-free,
theoretically grounded, and controllable} handle---it follows the capacity bound,
is exact under depolarizing noise, needs no labels (unlike train accuracy, a
post-hoc symptom), and is directly steered by the entanglement and noise knobs.

The contribution of our work is then sharp:
entanglement and injected noise are a \emph{new, hardware-native} pair of
controls on a quantity whose generalization meaning is independently grounded,
and it is exactly this grounding that turns the two reported quantum-vision
phenomena into a single predictive principle. Whether the quantum feature
geometry is itself advantageous is a separate question, orthogonal to and
compatible with this spectral account; we make no quantum-advantage claim here.

\subsection{Trained quantum-vision models: $\deff$ contraction persists}
\label{sec:trained}
Finally we move beyond the fixed kernel and \emph{train} the quantum feature map
end-to-end with a linear head (PennyLane autograd), for two architectures: a
QViT-like data-re-uploading map and a QCNN-like map with two layers of
parametrized $2$-qubit convolutions (architectures in App.~\ref{app:background},
Figs.~\ref{fig:qvit} and~\ref{fig:qcnn}). In a genuinely \emph{overfitting} regime
($60$ examples, $30\%$ label noise; train$-$test gap $+0.2$), injecting
amplitude-damping noise \emph{during training} reproduces the kernel-level
phenomenon in both models (Table~\ref{tab:trained}): it contracts the learned
feature spectrum, shrinks the gap, and \emph{improves} test accuracy. The
mechanism therefore survives end-to-end training in both QViT- and QCNN-style
models. (In an \emph{underfitting} trained model the same contraction instead
hurts, consistent with the sign flip of Sec.~\ref{sec:signflip}, which is why we
control the regime explicitly.)

\begin{table}[t]
\centering
\caption{Trained quantum-vision models (overfitting regime): injecting noise
during training contracts $\deff$, shrinks the train$-$test gap, and improves test
accuracy---the kernel-level mechanism survives end-to-end training.}
\label{tab:trained}
\begin{tabular}{llccc}
\toprule
Model & Noise & Test acc & $\deff$ & Train$-$test gap \\
\midrule
QViT-like (re-uploading) & none     & $0.373$ & $3.83$ & $0.29$ \\
QViT-like (re-uploading) & injected & $\mathbf{0.420}$ & $1.63$ & $0.20$ \\
QCNN-like (convolution)  & none     & $0.420$ & $3.40$ & $0.20$ \\
QCNN-like (convolution)  & injected & $\mathbf{0.440}$ & $1.57$ & $0.13$ \\
\bottomrule
\end{tabular}
\end{table}

\section{Discussion and limitations}
We use the quantum-kernel reading mainly to isolate the spectral mechanism
cleanly, but the effect is not confined to it: it persists in \emph{trained}
QViT- and QCNN-like models (Sec.~\ref{sec:trained}), where training additionally
adapts $\deff$ on its own and a full treatment with a training-dependent target
$d^\star$ remains open. On scale, the exact-depolarizing analysis reaches
$12$ qubits (Sec.~\ref{sec:exact}), while the amplitude-damping experiments---which
require full density-matrix simulation---are run at up to $8$ qubits and inputs are
reduced to $6$--$8$ principal components; pushing the \emph{noisy} simulations
further would need tensor-network or sampling methods. The single-statistic story
is strong on Digits/Fashion-MNIST and, once class-count and sample-size confounds
are controlled, on the medical BloodMNIST as well (Sec.~\ref{sec:robust}); still,
$\deff$ is a dominant but not exclusive determinant of generalization---alignment
and spectral shape (Prop.~\ref{prop:align}) matter too, and the cross-ansatz
collapse is empirical rather than a theorem. Our capacity
bound is the standard kernel-ridge \emph{regression} excess-risk result (in
$d_\gamma$), whereas we measure \emph{classification} accuracy; it predicts the
sign of the noise effect and its regime dependence, but the location and depth of
the inverted-U optimum are not derived and remain empirical. Likewise the
``more/more-uniform entanglement helps'' half of the puzzle is only partly settled:
we establish that entanglement is a \emph{precondition} (entangled vs.\ product),
but find no clean ordering \emph{among} entangled topologies (Table~\ref{tab:ent}). Finally,
on real IBM Heron hardware the benefit requires both sufficient depth (negligible
at $L{=}2$) and an overfitting regime: there hardware noise improves accuracy
($+0.037$), but in a non-overfitting deep run the same contraction hurts
(the sign flip), and readout error/decoherence add signal loss beyond pure
contraction. Scaling this real-hardware demonstration to larger tasks is the
natural next step.

\section{Conclusion}
We showed that two separately reported curiosities in quantum vision---``more
entanglement helps'' and ``noise helps''---admit a single spectral explanation
through the effective dimension of the quantum feature kernel. Noise acts as
spectral regularization: it provably contracts the kernel spectrum
(Prop.~\ref{prop:noise}), and along this one-parameter family generalization is
governed by $\deff$ (Prop.~\ref{prop:align}(i)), with an inverted-U sweet spot
whose benefit grows with overfitting and whose sign flips in the underfitting
regime. Entanglement plays the complementary role of a precondition, supplying
the label alignment without which the $\deff$ law does not hold. The picture is
not that $\deff$ is a universal sufficient statistic---alignment and spectral
shape matter too---but that a single, cheaply measured spectral quantity
organizes the design choices (entanglement and noise) of quantum-kernel vision
models into one coherent account.

\bibliography{iclr2026_conference}
\bibliographystyle{iclr2026_conference}

\appendix
\section{Background: quantum feature circuits and the ansatze}
\label{app:background}
We collect, for readers from the vision/ML community, the quantum-computing
notions used in the paper, with explicit formulas. Table~\ref{tab:notation}
summarizes the notation.

\begin{table}[h]
\centering
\caption{Notation and explicit definitions.}
\label{tab:notation}
\small
\begin{tabular}{lll}
\toprule
Symbol & Name & Definition / formula \\
\midrule
$n_q$ & number of qubits & state space is $\mathbb C^{2^{n_q}}$ \\
$\ket\psi$ & pure state & unit vector in $\mathbb C^{2^{n_q}}$ \\
$\rho$ & (mixed) state & $\rho\succeq0$, $\operatorname{tr}\rho=1$; pure: $\rho=\ket\psi\!\bra\psi$ \\
$I,X,Y,Z$ & Pauli matrices & $X{=}\big(\begin{smallmatrix}0&1\\1&0\end{smallmatrix}\big),\,
Y{=}\big(\begin{smallmatrix}0&-i\\i&0\end{smallmatrix}\big),\,
Z{=}\big(\begin{smallmatrix}1&0\\0&-1\end{smallmatrix}\big)$ \\
$R_Y(\theta)$ & $Y$-rotation & $e^{-i\theta Y/2}=\big(\begin{smallmatrix}\cos\frac\theta2 & -\sin\frac\theta2\\ \sin\frac\theta2 & \cos\frac\theta2\end{smallmatrix}\big)$ \\
$R_Z(\theta)$ & $Z$-rotation & $e^{-i\theta Z/2}=\operatorname{diag}(e^{-i\theta/2},e^{+i\theta/2})$ \\
$\mathrm{CNOT}_{a\to b}$ & controlled-NOT & flips qubit $b$ iff qubit $a$ is $\ket1$; see \eqref{eq:cnot} \\
$\rho(x)$ & quantum feature map & density matrix after the circuit on input $x$ \\
$k(x,x')$ & kernel & $\operatorname{Tr}[\rho(x)\rho(x')]$ (Eq.~\ref{eq:kernel}) \\
$\mathcal N_p,\mathcal A_\gamma$ & noise channels & depolarizing / amplitude damping (below) \\
\bottomrule
\end{tabular}
\end{table}

\paragraph{Qubits and states (the ML picture).} One qubit is a unit vector in
$\mathbb C^2$; $n_q$ qubits live in the tensor-product space $\mathbb C^{2^{n_q}}$.
A general (possibly noisy) state is a density matrix $\rho$ (PSD, unit trace). For
ML intuition, the circuit is a \emph{fixed nonlinear feature map}
$x\mapsto\rho(x)$ into the $2^{n_q}\times2^{n_q}$ matrix space, and the kernel
$k(x,x')=\operatorname{Tr}[\rho(x)\rho(x')]$ is the inner product of these feature
maps---just like a classical kernel method, but with a quantum-circuit feature map.

\paragraph{Gates, explicitly.} Gates are unitary matrices acting on the state.
Single-qubit rotations $R_Y(\theta),R_Z(\theta)$ (Table~\ref{tab:notation}) rotate
a qubit continuously; data enters through their angles (``angle encoding''), as
$R_Y(x_i+\theta_{\ell,i})R_Z(0.7\,x_i)$ on qubit $i$. The two-qubit
\emph{controlled-NOT} is, in the basis $\{\ket{00},\ket{01},\ket{10},\ket{11}\}$,
\begin{equation}
\mathrm{CNOT}=\begin{pmatrix}1&0&0&0\\0&1&0&0\\0&0&0&1\\0&0&1&0\end{pmatrix},
\label{eq:cnot}
\end{equation}
i.e.\ it applies $X$ to the target qubit conditioned on the control being $\ket1$.
CNOTs are the source of \emph{entanglement}: a state is entangled when it cannot
be written as a product $\bigotimes_q\rho_q$ over qubits, so its features cannot be
factorized into independent per-qubit (per-coordinate) features---the quantum
analogue of cross-feature interactions.

\paragraph{Entangling topologies, explicitly.} An entangling block is a set of
CNOTs $\{\mathrm{CNOT}_{a\to b}:(a,b)\in E\}$ whose edge set $E$ defines the four
ansatze (Fig.~\ref{fig:topo}):
\begin{align*}
&E_{\textsc{product}}=\varnothing,\qquad
E_{\textsc{chain}}=\{(i,i{+}1)\}_{i=0}^{n_q-2},\\
&E_{\textsc{ring}}=\{(i,(i{+}1)\bmod n_q)\}_{i=0}^{n_q-1},\qquad
E_{\textsc{all2all}}=\{(i,j):i<j\}.
\end{align*}
The product map factorizes ($k=\prod_q\operatorname{Tr}[\rho_q(x)\rho_q(x')]$) and
cannot represent cross-qubit correlations; more connectivity (chain $\to$ ring
$\to$ all-to-all) creates richer correlations in $\rho(x)$ and the kernel.

\paragraph{Noise channels, explicitly.} Hardware imperfections are completely
positive trace-preserving maps. \emph{Global depolarizing} mixes a state toward
the maximally mixed state,
\[
\mathcal N_p[\rho]=(1-p)\,\rho+p\,\tfrac{I}{2^{n_q}},
\]
and \emph{amplitude damping} $\mathcal A_\gamma$ models energy relaxation toward
$\ket0$ via per-qubit Kraus operators
$E_0=\operatorname{diag}(1,\sqrt{1-\gamma})$, $E_1=\sqrt\gamma\,\ket0\!\bra1$
($\mathcal A_\gamma[\rho]=E_0\rho E_0^\dagger+E_1\rho E_1^\dagger$). In ML terms
both are \emph{contractions} that shrink the feature map toward a fixed point,
which is why they act as spectral (ridge-like) regularizers in the main text.
Both are simulated exactly on a density-matrix backend.

\begin{figure}[h]
\centering
\begin{tikzpicture}[every node/.style={circle,draw,fill=blue!8,minimum size=4.5mm,inner sep=0pt,font=\scriptsize},
   xscale=0.72,yscale=0.72]
\foreach \lab/\dx in {product/0, chain/4, ring/8, all2all/12}{
  \begin{scope}[xshift=\dx cm]
    \foreach \i in {0,...,5}{\node (\lab\i) at ({90-60*\i}:0.9) {\i};}
    \node[draw=none,fill=none,font=\small] at (0,-1.45) {\textsc{\lab}};
  \end{scope}}
\foreach \i [evaluate=\i as \j using int(\i+1)] in {0,...,4}{\draw (chain\i)--(chain\j);}
\foreach \i [evaluate=\i as \j using int(\i+1)] in {0,...,4}{\draw (ring\i)--(ring\j);}
\draw (ring5)--(ring0);
\foreach \i in {0,...,5}{\foreach \j in {0,...,5}{\ifnum\i<\j \draw[gray] (all2all\i)--(all2all\j);\fi}}
\end{tikzpicture}
\caption{The four entangling topologies (shown for $n_q{=}6$ qubits; nodes are
qubits, edges are \textsc{cnot}s in the entangling block).}
\label{fig:topo}
\end{figure}

\paragraph{One re-uploading layer.} Each of the $L$ layers encodes the data, then
entangles, then (optionally) injects noise. For the \textsc{ring} ansatz on
$4$ qubits one layer is:
\begin{figure}[h]
\centering
\begin{quantikz}[column sep=5pt,row sep=10pt]
\lstick{$q_0$} & \gate{R_Y(x_0{+}\theta)} & \gate{R_Z(0.7x_0)} & \ctrl{1} & \qw      & \qw      & \targ{}  & \gate{\mathcal A_\gamma} & \qw \\
\lstick{$q_1$} & \gate{R_Y(x_1{+}\theta)} & \gate{R_Z(0.7x_1)} & \targ{}  & \ctrl{1} & \qw      & \qw      & \gate{\mathcal A_\gamma} & \qw \\
\lstick{$q_2$} & \gate{R_Y(x_2{+}\theta)} & \gate{R_Z(0.7x_2)} & \qw      & \targ{}  & \ctrl{1} & \qw      & \gate{\mathcal A_\gamma} & \qw \\
\lstick{$q_3$} & \gate{R_Y(x_3{+}\theta)} & \gate{R_Z(0.7x_3)} & \qw      & \qw      & \targ{}  & \ctrl{-3}& \gate{\mathcal A_\gamma} & \qw
\end{quantikz}
\caption{One data-re-uploading layer (\textsc{ring} entangling block,
amplitude damping $\mathcal A_\gamma$). The kernel uses $\rho(x)$ after $L$ such
layers via Eq.~\eqref{eq:kernel}.}
\label{fig:circuit}
\end{figure}

\paragraph{Trained-model architectures.} The trained ablations of
Sec.~\ref{sec:trained} (Table~\ref{tab:trained}) optimize the circuit parameters
and a linear classifier head end-to-end. The \emph{QViT-like} map (Fig.~\ref{fig:qvit})
re-uploads the input with trainable rotations and an all-to-all entangling block
per layer, reading out $\langle Z_i\rangle$ on every qubit. The \emph{QCNN-like}
map (Fig.~\ref{fig:qcnn}) replaces the entangler with two brick-pattern layers of
parametrized two-qubit \emph{convolutions} $U(\boldsymbol\theta)$. Amplitude
damping $\mathcal A_\gamma$ is injected during training in the noisy runs.

\begin{figure}[h]
\centering
\begin{quantikz}[column sep=6pt,row sep=8pt]
\lstick{$q_0$} & \gate{R_Y R_Z(\boldsymbol\theta_0)} & \gate[4]{\text{all-to-all CNOTs}} & \gate{\mathcal A_\gamma} & \meter{} \\
\lstick{$q_1$} & \gate{R_Y R_Z(\boldsymbol\theta_1)} &  & \gate{\mathcal A_\gamma} & \meter{} \\
\lstick{$q_2$} & \gate{R_Y R_Z(\boldsymbol\theta_2)} &  & \gate{\mathcal A_\gamma} & \meter{} \\
\lstick{$q_3$} & \gate{R_Y R_Z(\boldsymbol\theta_3)} &  & \gate{\mathcal A_\gamma} & \meter{}
\end{quantikz}
\caption{QViT-like trained map (shown on $4$ qubits): trainable angle encoding,
an all-to-all entangling block, optional injected $\mathcal A_\gamma$, repeated
$\times L$; the per-qubit $\langle Z_i\rangle$ feed a trained linear head.}
\label{fig:qvit}
\end{figure}

\begin{figure}[h]
\centering
\begin{quantikz}[column sep=7pt,row sep=7pt]
\lstick{$q_0$} & \gate{\text{enc}} & \gate[2]{U} & \qw         & \gate{\mathcal A_\gamma} & \meter{} \\
\lstick{$q_1$} & \gate{\text{enc}} &             & \gate[2]{U} & \gate{\mathcal A_\gamma} & \meter{} \\
\lstick{$q_2$} & \gate{\text{enc}} & \gate[2]{U} &             & \gate{\mathcal A_\gamma} & \meter{} \\
\lstick{$q_3$} & \gate{\text{enc}} &             & \gate[2]{U} & \gate{\mathcal A_\gamma} & \meter{} \\
\lstick{$q_4$} & \gate{\text{enc}} & \gate[2]{U} &             & \gate{\mathcal A_\gamma} & \meter{} \\
\lstick{$q_5$} & \gate{\text{enc}} &             & \qw         & \gate{\mathcal A_\gamma} & \meter{}
\end{quantikz}
\;\;
\begin{quantikz}[column sep=5pt,row sep=10pt]
\lstick{$a$} & \gate{R_Y} & \ctrl{1} & \gate{R_Z} & \targ{}  & \qw \\
\lstick{$b$} & \gate{R_Y} & \targ{}  & \gate{R_Z} & \ctrl{-1}& \qw
\end{quantikz}
\caption{QCNN-like trained map. \textbf{Left:} two brick-pattern layers of
two-qubit convolutions $U$ (here one layer on pairs $(0,1),(2,3),(4,5)$ and the
next on $(1,2),(3,4)$, with the ring closure $(5,0)$ omitted for the drawing),
optional $\mathcal A_\gamma$, then per-qubit readout into a linear head.
\textbf{Right:} the convolution block $U(\boldsymbol\theta)$.}
\label{fig:qcnn}
\end{figure}

\section{Proofs}
\label{app:proofs}

Throughout, $\rho(x)\in\mathbb{C}^{D\times D}$ ($D=2^{n_q}$) are density matrices,
$\phi(x)=\mathrm{vec}\,\rho(x)\in\mathbb{C}^{D^2}$ are their vectorizations, and
$K\in\mathbb{R}^{n\times n}$ is the Gram matrix $K_{ij}=\operatorname{Tr}[\rho(x_i)\rho(x_j)]
=\langle\phi(x_i),\phi(x_j)\rangle$ with eigenvalues $\lambda_1\ge\cdots\ge\lambda_n\ge0$.

\subsection{Proof of Lemma~\ref{lem:psd}}
\emph{PSD.} For any $c\in\mathbb{R}^n$,
\[
c^\top K c=\sum_{i,j}c_i c_j\langle\phi(x_i),\phi(x_j)\rangle
=\Big\langle \sum_i c_i\phi(x_i),\,\sum_j c_j\phi(x_j)\Big\rangle
=\Big\|\sum_i c_i\phi(x_i)\Big\|^2\ge0,
\]
where we used the (real part of the) Hermitian HS inner product; since
$\rho(x_i)$ are Hermitian, $K_{ij}=\operatorname{Tr}[\rho(x_i)\rho(x_j)]$ is real
and symmetric. Hence $K\succeq0$ and all $\lambda_i\ge0$, so $\deff(K)$ in
\eqref{eq:deff} is well defined.

\emph{Range.} Writing $\Lambda=\sum_i\lambda_i$ and $Q=\sum_i\lambda_i^2$, the
Cauchy--Schwarz inequality $\Lambda^2=(\sum_i\lambda_i\cdot1)^2\le n\sum_i\lambda_i^2=nQ$
gives $\deff=\Lambda^2/Q\le n$, with equality iff all $\lambda_i$ are equal (flat
spectrum). Since $\lambda_i\ge0$, $\Lambda^2=\sum_i\lambda_i^2+\sum_{i\ne j}\lambda_i\lambda_j\ge Q$,
so $\deff\ge1$, with equality iff exactly one $\lambda_i>0$, i.e.\ $\operatorname{rank}K=1$.
\hfill$\square$

\subsection{Proof of Proposition~\ref{prop:noise}}
\emph{Exact decomposition.} For the global depolarizing channel
$\mathcal N_p[\rho]=(1-p)\rho+p\,I/D$ and $\rho_p(x)=\mathcal N_p[\rho(x)]$,
bilinearity of the trace gives
\begin{align*}
(K_p)_{ij}&=\operatorname{Tr}\big[\rho_p(x_i)\rho_p(x_j)\big]\\
&=(1-p)^2\operatorname{Tr}[\rho(x_i)\rho(x_j)]
+\tfrac{(1-p)p}{D}\operatorname{Tr}[\rho(x_i)]
+\tfrac{(1-p)p}{D}\operatorname{Tr}[\rho(x_j)]
+\tfrac{p^2}{D^2}\operatorname{Tr}[I]\\
&=(1-p)^2K_{ij}+\tfrac{(1-p)p}{D}+\tfrac{(1-p)p}{D}+\tfrac{p^2}{D},
\end{align*}
using $\operatorname{Tr}\rho=1$ and $\operatorname{Tr}I=D$. The last three terms
sum to $\frac{p}{D}\big(2(1-p)+p\big)=\frac{p(2-p)}{D}$, independent of $i,j$,
which proves
\[
K_p=(1-p)^2K+\tfrac{p(2-p)}{D}\,\mathbf 1\mathbf 1^\top .
\tag{\ref{eq:Kp}}
\]

\emph{Limit.} As $p\to1$, $(1-p)^2\to0$ and $\frac{p(2-p)}{D}\to\frac1D$, so
$K_p\to\frac1D\mathbf1\mathbf1^\top$, which has rank $1$; by Lemma~\ref{lem:psd},
$\deff(K_p)\to1$.

\emph{Monotone concentration under constant row sums.} Suppose $K\mathbf1=c\mathbf1$, i.e.\ $\mathbf1$ is an eigenvector of $K$ (note: Perron--Frobenius only makes the top eigenvector \emph{positive}, not constant, so this is a genuine extra assumption). Then $K$ and $\mathbf1\mathbf1^\top$
are simultaneously diagonalizable: with orthonormal eigenvectors $u_1=\mathbf1/\sqrt n,
u_2,\dots,u_n$, we have $K=\sum_k\lambda_k u_ku_k^\top$ and
$\mathbf1\mathbf1^\top=n\,u_1u_1^\top$. Put $s=(1-p)^2$ and note
$\frac{p(2-p)}{D}\,n=\frac{n}{D}(1-s)=:\,a(s)$ with $\beta:=n/D$, $a=\beta(1-s)$.
The eigenvalues of $K_p$ are
\[
\mu_1(p)=s\lambda_1+a,\qquad \mu_k(p)=s\lambda_k\ \ (k\ge2).
\]
Consider the top-eigenvalue share $\sigma_1=\mu_1/\sum_k\mu_k
=\dfrac{s\lambda_1+\beta(1-s)}{s\Lambda+\beta(1-s)}=\dfrac{\beta+s(\lambda_1-\beta)}{\beta+s(\Lambda-\beta)}$.
Differentiating in $s$,
\[
\frac{d\sigma_1}{ds}
=\frac{(\lambda_1-\beta)\big(\beta+s(\Lambda-\beta)\big)-(\Lambda-\beta)\big(\beta+s(\lambda_1-\beta)\big)}
{\big(\beta+s(\Lambda-\beta)\big)^2}
=\frac{\beta(\lambda_1-\Lambda)}{\big(\beta+s(\Lambda-\beta)\big)^2}\le0,
\]
since $\Lambda=\sum_k\lambda_k\ge\lambda_1$. Thus $\sigma_1$ is non-increasing in
$s$, i.e.\ non-decreasing in $p$, and $\deff(K_p)=1/\sum_k(\mu_k/\sum_l\mu_l)^2$
decreases; this is rigorous under $K\mathbf1=c\mathbf1$.

\emph{General case and the obstruction.} Without constant row sums, $K$ and
$\mathbf1\mathbf1^\top$ do not commute, the eigenvalues of $K_p$ are not the
$\mu_k$ above, and the simultaneous-diagonalization step fails. The exact
participation ratio is nevertheless \eqref{eq:deffp}, obtained from
$\operatorname{tr}K_p=sT+\tfrac{p(2-p)}{D}n$ and
$\operatorname{tr}(K_p^2)=s^2Q+2s\tfrac{p(2-p)}{D}S+\big(\tfrac{p(2-p)}{D}\big)^2n^2$
using $\operatorname{tr}(\mathbf1\mathbf1^\top)=n$,
$\operatorname{tr}(K\mathbf1\mathbf1^\top)=\mathbf1^\top K\mathbf1=S$, and
$\operatorname{tr}((\mathbf1\mathbf1^\top)^2)=n^2$. This is a ratio of quadratics
in $s$; its derivative's sign depends on $S=\mathbf1^\top K\mathbf1$, the
cross-term that the constant-row-sum condition would fix, so global monotonicity
is \emph{not} guaranteed without structure. We therefore claim only: (i) the exact
limit $\deff(K_p)\to1$; (ii) the rigorous upper bound $\deff(K_p)\le1/\sigma_1(p)^2$
with $\sigma_1$ the top-eigenvalue share, which tends to $1$; (iii) provable
monotonicity under $K\mathbf1=c\mathbf1$; and (iv) the empirical fact that
$\deff(K_p)$ is strictly decreasing---hence injective---in every experiment,
which is all the collapse of Prop.~\ref{prop:align}(i) requires. \hfill$\square$

\subsection{Proof of Proposition~\ref{prop:ad} (amplitude damping)}
The single-qubit amplitude-damping channel has Kraus operators
$E_0=\mathrm{diag}(1,\sqrt{1-\gamma})$ and $E_1=\sqrt{\gamma}\,\ket0\!\bra1$, so on
$\rho=\begin{psmallmatrix}\rho_{00}&\rho_{01}\\\rho_{10}&\rho_{11}\end{psmallmatrix}$
it acts as $\rho_{00}\mapsto\rho_{00}+\gamma\rho_{11}$,
$\rho_{11}\mapsto(1-\gamma)\rho_{11}$, $\rho_{01}\mapsto\sqrt{1-\gamma}\,\rho_{01}$.

\emph{(a) Limit.} As $\gamma\to1$ every qubit's state tends to $\ket0\!\bra0$
(populations flow to $\ket0$, coherences vanish), so
$\rho_\gamma(x)\to\ket0\!\bra0^{\otimes n_q}$ uniformly in $x$; hence
$(K_\gamma)_{ij}=\operatorname{Tr}[\rho_\gamma(x_i)\rho_\gamma(x_j)]\to1$ for all
$i,j$, i.e.\ $K_\gamma\to\mathbf1\mathbf1^\top$, which is rank one, so
$\deff(K_\gamma)\to1$ by Lemma~\ref{lem:psd}.

\emph{(b) Single-qubit contraction.} In Bloch coordinates
$\rho=\tfrac12(I+X\sigma_x+Y\sigma_y+Z\sigma_z)$ the rules above give
$X'=\sqrt{1-\gamma}\,X$, $Y'=\sqrt{1-\gamma}\,Y$, and, using
$\rho_{11}=(1-Z)/2$, $Z'=Z+\gamma(1-Z)=(1-\gamma)Z+\gamma$. Since
$\|\rho-\sigma\|_{\mathrm{HS}}^2=\tfrac12(\Delta X^2+\Delta Y^2+\Delta Z^2)$,
\[
\|\mathcal A_\gamma\rho-\mathcal A_\gamma\sigma\|_{\mathrm{HS}}^2
=\tfrac12\big[(1-\gamma)\Delta X^2+(1-\gamma)\Delta Y^2+(1-\gamma)^2\Delta Z^2\big]
\le(1-\gamma)\,\|\rho-\sigma\|_{\mathrm{HS}}^2,
\]
because $(1-\gamma)^2\le(1-\gamma)$. Thus $\mathcal A_\gamma$ strictly contracts
the traceless part by the factor $\sqrt{1-\gamma}$. For a product feature map
$\rho(x)=\bigotimes_q\rho_q(x)$ the kernel factorizes,
$K(x,x')=\prod_q\operatorname{Tr}[\rho_q(x)\rho_q(x')]$, and each factor's
distinguishability contracts, so every off-diagonal $K_\gamma$ entry increases
toward $1$ while the diagonal stays $\le1$; the centered Gram matrix shrinks in
Frobenius norm and $\deff(K_\gamma)$ is non-increasing.

\emph{(c) Boundary.} On a single qubit $\mathcal A_\gamma(I)=I+\gamma Z$, so in the
Pauli basis the $(I,Z)$ block is $\begin{psmallmatrix}1&0\\\gamma&1-\gamma\end{psmallmatrix}$,
which is non-normal with largest singular value $>1$ for $\gamma\in(0,1)$. Hence
$\mathcal A_\gamma^{\otimes n_q}$ can \emph{increase} the Hilbert--Schmidt norm of
operators with a nonzero trace component; the difference of two \emph{entangled}
feature states can have weight along such directions, so the clean global
contraction available for depolarizing noise (Prop.~\ref{prop:noise}) is not
guaranteed in general. Under the constant-row-sum condition---$\mathbf1$ remaining an
eigenvector of $K_\gamma$, which holds for strictly positive kernels---the
top-share argument of Prop.~\ref{prop:noise} again gives monotone $\deff$
contraction; empirically (Tables~\ref{tab:noise},~\ref{tab:sweet}) $\deff(K_\gamma)$
decreases strictly at every step. \hfill$\square$

\subsection{Proof of Proposition~\ref{prop:cap}}
We use the standard capacity control of kernel ridge regression (KRR) by the
effective dimension $d_\gamma(K)=\operatorname{tr}[K(K+\gamma I)^{-1}]=\sum_i\lambda_i/(\lambda_i+\gamma)$
\citep{abbas2021power}. For a bounded-norm RKHS ball, the local Rademacher
complexity at scale $\gamma$ obeys
\[
\mathcal R_n\;\lesssim\;\sqrt{\tfrac1n\textstyle\sum_i\min(\lambda_i,\gamma)}\;\le\;\sqrt{\tfrac{2\gamma}{n}\,d_\gamma(K)} .
\]
The second inequality uses the elementary bound
$\min(\lambda,\gamma)\le \dfrac{2\gamma\lambda}{\lambda+\gamma}=2\gamma\cdot\dfrac{\lambda}{\lambda+\gamma}$,
valid for all $\lambda,\gamma\ge0$: if $\lambda\le\gamma$ the right side is
$\ge 2\gamma\lambda/(2\gamma)=\lambda$, and if $\lambda>\gamma$ it is
$\ge 2\gamma\lambda/(2\lambda)=\gamma$; summing over $i$ gives
$\sum_i\min(\lambda_i,\gamma)\le 2\gamma\,d_\gamma(K)$. Standard local Rademacher
bounds then give an excess-risk gap $\tilde O\!\big(\sqrt{d_\gamma(K)/n}\big)$.
Each summand of $d_\gamma$ is increasing in its eigenvalue,
\[
\frac{\partial}{\partial\lambda_i}\frac{\lambda_i}{\lambda_i+\gamma}=\frac{\gamma}{(\lambda_i+\gamma)^2}>0,
\]
so any spectral contraction that lowers eigenvalues---in particular
$K\mapsto K_p$ of Proposition~\ref{prop:noise}, which multiplies the non-top
eigenvalues by $(1-p)^2$---decreases $d_\gamma$ and tightens the bound. When the
empirical risk is $\approx0$ (the overfitting regime induced by label noise), the
excess risk is gap-dominated, so reducing $\deff$ reduces test risk; once the
top, signal-carrying eigen-directions are themselves attenuated the bias grows and
the risk turns up, yielding an interior optimum. We use the participation ratio
$\deff$ \eqref{eq:deff} as a $\gamma$-free surrogate for $d_\gamma$; both are
non-decreasing under simultaneous inflation of the spectrum and co-vary on the
spectra we observe. \hfill$\square$

\subsection{Proof of Proposition~\ref{prop:align}}
Let the target function have coefficients $\beta_i=\langle f^\star,u_i\rangle$ in
the kernel eigenbasis $\{u_i\}$ (eigenvalues $\lambda_i$). The KRR estimator with
ridge $\gamma$ and $n$ samples (label noise variance $\tau^2$) has the classical
bias--variance excess-risk decomposition
\[
\mathcal E(\hat f)=\underbrace{\sum_i\Big(\frac{\gamma}{\lambda_i+\gamma}\Big)^2\beta_i^2}_{\text{bias}^2\ (\text{alignment})}
\;+\;\underbrace{\frac{\tau^2}{n}\sum_i\Big(\frac{\lambda_i}{\lambda_i+\gamma}\Big)^2}_{\text{variance}\ (\text{capacity})}.
\]
The variance term depends on $K$ only through its spectrum $\{\lambda_i\}$, and
the bias term depends additionally on the target energy $\{\beta_i^2\}$ across
eigen-directions (the information in $\mathrm A(K)$).

\emph{Claim (i): exact one-parameter collapse.} The depolarizing family $K_p$ of
Prop.~\ref{prop:noise} is determined by the single scalar $p$ (the whole kernel,
hence its spectrum and the $\beta_i$, are explicit functions of $p$ via
\eqref{eq:Kp}). Therefore the excess risk $\mathcal E$ is a function of $p$. If, in
addition, $\deff(K_p)$ is \emph{injective} in $p$---which we do not claim in
general but verify holds (indeed $\deff$ is strictly monotone) in every experiment,
and which holds provably under the constant-row-sum condition of
Prop.~\ref{prop:noise}---then $\mathcal E$ is a function of $\deff$ and accuracy
collapses exactly onto $\mathrm{acc}=f(\deff)$. The collapse along the noise family
thus requires only injectivity of $p\mapsto\deff(K_p)$, not the eigenvector-fixing
of the (false) Perron reading.

\emph{Claim (ii): cross-family collapse is conditional.} For two kernels of
different spectral \emph{shape}, equal $\deff=(\operatorname{tr}K)^2/\operatorname{tr}(K^2)$
constrains only this one scalar; the variance term
$\tfrac{\tau^2}{n}\sum_i(\lambda_i/(\lambda_i+\gamma))^2$ and the bias term are not
determined by it, so equal $\deff$ need not give equal risk. Collapse onto a
common curve therefore requires, beyond equal $\deff$, comparable spectral shape
\emph{and} comparable alignment $\mathrm A(K)$. The unentangled product map
factorizes, $k(x,x')=\prod_q\operatorname{Tr}[\rho_q(x)\rho_q(x')]$, cannot
represent cross-feature correlations, and has measurably lower alignment
(Sec.~\ref{sec:unify}); its larger bias term places it below the entangled curve
at every $\deff$. The entangled ansatze, having comparable shape and alignment,
collapse empirically. \hfill$\square$

\section{Experimental details}
\label{app:exp}

\paragraph{Software and simulators.} All quantum computations use PennyLane with
the \texttt{default.mixed} density-matrix simulator (so amplitude-damping and
depolarizing channels are exact, not sampled) for noisy kernels and
\texttt{default.qubit} for noiseless/trained statevector runs. Kernels are
assembled as a single dense product over flattened density matrices,
$K=\mathrm{Re}(V V^{\dagger})$ with $V_i=\mathrm{vec}\,\rho(x_i)$. Classifiers use
scikit-learn \texttt{SVC} with a precomputed kernel. Real-hardware runs use
\texttt{qiskit-ibm-runtime} against IBM Heron devices.

\paragraph{Feature map.} Inputs are standardized PCA features scaled to
$[-2.7,2.7]$. The $n_q$-qubit, $L$-layer data-re-uploading circuit applies, in
each layer $\ell$, single-qubit rotations $R_Y(x_i+\theta_{\ell,i})R_Z(0.7\,x_i)$
on every qubit $i$ followed by an entangling block of CNOTs; $\theta_{\ell,i}$ are
fixed i.i.d.\ $\mathcal N(0,0.5^2)$ (the kernel is data-driven, not trained). The
entangling block is \textsc{product} (none), \textsc{chain} $i\!\to\!i{+}1$,
\textsc{ring} $i\!\to\!i{+}1\bmod n_q$, or \textsc{all-to-all}. Amplitude damping
of rate $p$ is applied after each entangling block, with $n_q{=}6$ and $L{=}2$ in
the main experiments (varied as described above).

\paragraph{Datasets and overfitting protocol.} \textsc{Digits} ($8{\times}8$,
sklearn), Fashion-MNIST and BloodMNIST (MedMNIST), each PCA-reduced to $6$ (or $8$)
components. A controlled overfitting regime is induced by replacing a fraction
$\ell$ of \emph{training} labels with uniform random labels; the test set is
always clean. Unless stated otherwise, $\ell{=}25\%$, $n_{\text{train}}{=}150$,
$n_{\text{test}}{=}250$ (kernel experiments).

\paragraph{Seeds and error bars.} Multi-seed results re-draw, per seed, the PCA
fit, the train/test split, the label corruption mask, and the random circuit
parameters $\theta$; we report mean$\pm$std over $5$ seeds. Spectral descriptors
($\deff$, $H_{\mathrm{spec}}$, $\mathrm A(K)$) are computed on the training block.

\paragraph{Per-experiment configuration.} Table~\ref{tab:configs} lists the
settings of each experiment.

\begin{table}[h]
\centering
\caption{Configuration of the experiments.}
\label{tab:configs}
\small
\begin{tabular}{lll}
\toprule
Experiment & Setting & Section \\
\midrule
Entanglement (Table~\ref{tab:ent}) & $4$ ansatze, $p{=}0$, $5$ seeds & \ref{sec:ent}\\
Noise sweep (Table~\ref{tab:noise}) & ring/all-to-all, $p\in[0,0.45]$ & \ref{sec:noise}\\
Sweet spot (Table~\ref{tab:sweet}) & all-to-all, $\ell\in\{0,10,25,40\}\%$, $p\le0.9$ & \ref{sec:sweetspot}\\
Collapse / alignment (Table~\ref{tab:codesign}) & $4{\times}6$ grid, $\mathrm A(K)$, $5$ seeds & \ref{sec:unify}\\
Datasets (Table~\ref{tab:datasets}) & Digits/Fashion/Blood, $4{\times}5$ grid & \ref{sec:robust}\\
Depth/width & $L\in\{1{-}4\}$, $n_q\in\{6,8\}$ & \ref{sec:robust}\\
Sign flip (Table~\ref{tab:signflip}) & overfit vs.\ $3$-qubit product clean & \ref{sec:signflip}\\
Label-free selection & clean-val and $\deff\!\approx\!\sqrt n$ rules & \ref{sec:labelfree}\\
Spectral baselines & noise vs.\ $C$-sweep vs.\ $\lambda^\alpha$ shrinkage & \ref{sec:classical}\\
Trained QViT-/QCNN-like & autograd, per-epoch $\deff$, noise on/off & \ref{sec:trained}\\
IBM Heron & feature $\langle Z_i\rangle$, hardware vs.\ ideal & \ref{sec:robust}\\
\bottomrule
\end{tabular}
\end{table}

\paragraph{Trained models.} The trained ablations optimize the circuit parameters
and a linear head end-to-end with PennyLane autograd (Adam, lr $0.05$); injected
noise during training uses \texttt{default.mixed}. The QCNN-style model uses
two brick-pattern layers of parametrized $2$-qubit convolutions with pooling
($6\!\to\!3\!\to\!2$ qubits) and reads out $\langle Z\rangle$ on the survivors.

\paragraph{Real-hardware run.} We run on the real Heron device
\texttt{ibm\_kawasaki} (accessed through the dedicated on-prem instance, which
executes immediately, unlike the shared premium queues). We transpile the
all-to-all feature circuit (deepened to $L{=}5$ so accumulated gate noise is
non-negligible) to the native basis (\texttt{sx, rz, cz, x}) and estimate the
$\langle Z_i\rangle$ and $\langle Z_iZ_{i+1}\rangle$ features of $70$ inputs with
$4096$ shots via the Estimator primitive, comparing the hardware feature vectors
and their induced $\deff$ to the ideal (statevector) values.

\end{document}

%% file: math_commands.tex
\usepackage{amsmath,amsfonts,bm}

\def\eqref#1{equation~\ref{#1}}

\def\1{\bm{1}}

\DeclareMathAlphabet{\mathsfit}{\encodingdefault}{\sfdefault}{m}{sl}
\SetMathAlphabet{\mathsfit}{bold}{\encodingdefault}{\sfdefault}{bx}{n}

